\begin{document}
\title[Combining Explicit and Implicit Feature Interactions for Recommender Systems]{xDeepFM: Combining Explicit and Implicit Feature Interactions for Recommender Systems}

\author[J. Lian]{Jianxun Lian}
\affiliation{%
  \institution{University of Science and Technology of China} 
}
\email{jianxun.lian@outlook.com}

\author[X. Zhou]{Xiaohuan Zhou}
\affiliation{%
  \institution{Beijing University of Posts and Telecommunications}
}
\email{maggione@bupt.edu.cn}

\author[F. Zhang]{Fuzheng Zhang}
\affiliation{%
  \institution{Microsoft Research}
}
\email{fuzzhang@microsoft.com}

\author[Z. Chen]{Zhongxia Chen}
\affiliation{%
  \institution{University of Science and Technology of China}
  }
\email{czx87@mail.ustc.edu.cn}

\author[X. Xie]{Xing Xie}
\affiliation{%
  \institution{Microsoft Research}
}
\email{xingx@microsoft.com}

\author[G. Sun]{Guangzhong Sun}
\affiliation{%
  \institution{University of Science and Technology of China} 
}
\email{gzsun@ustc.edu.cn}

\begin{abstract}
Combinatorial features are essential for the success of many commercial models. Manually crafting these features usually comes with high cost due to the variety, volume and velocity of raw data in web-scale systems. Factorization based models, which measure interactions in terms of vector product, can learn patterns of combinatorial features automatically and generalize to unseen features as well. With the great success of deep neural networks (DNNs) in various fields, recently researchers have proposed several DNN-based factorization model to learn both low- and high-order feature interactions. Despite the powerful ability of learning an arbitrary function from data, plain DNNs generate feature interactions implicitly and at the bit-wise level. In this paper, we propose a novel Compressed Interaction Network (CIN), which aims to generate feature interactions in an explicit fashion and at the vector-wise level. We show that the CIN share some functionalities with convolutional neural networks (CNNs) and recurrent neural networks (RNNs). We further combine a CIN and a classical DNN into one unified model, and named this new model eXtreme Deep Factorization Machine (xDeepFM). On one hand, the xDeepFM is able to learn certain bounded-degree feature interactions explicitly; on the other hand, it can learn arbitrary low- and high-order feature interactions implicitly. We conduct comprehensive experiments on three real-world datasets. Our results demonstrate that xDeepFM outperforms state-of-the-art models. We have released the source code of xDeepFM at \textsl{\url{https://github.com/Leavingseason/xDeepFM}}.
\end{abstract}

%
%

\begin{CCSXML}
<ccs2012>
<concept>
<concept_id>10002951.10003260.10003261.10003271</concept_id>
<concept_desc>Information systems~Personalization</concept_desc>
<concept_significance>500</concept_significance>
</concept>
<concept>
<concept_id>10010147.10010257.10010293.10010294</concept_id>
<concept_desc>Computing methodologies~Neural networks</concept_desc>
<concept_significance>500</concept_significance>
</concept>
<concept>
<concept_id>10010147.10010257.10010293.10010309</concept_id>
<concept_desc>Computing methodologies~Factorization methods</concept_desc>
<concept_significance>500</concept_significance>
</concept>
</ccs2012>
\end{CCSXML}

\ccsdesc[500]{Information systems~Personalization}
\ccsdesc[500]{Computing methodologies~Neural networks}
\ccsdesc[500]{Computing methodologies~Factorization methods}

\keywords{Factorization machines, neural network, recommender systems, deep learning, feature interactions}

\copyrightyear{2018}
\acmYear{2018}
\setcopyright{acmcopyright}
\acmConference[KDD '18]{The 24th ACM SIGKDD International Conference on Knowledge Discovery \& Data Mining}{August 19--23, 2018}{London, United Kingdom}
\acmBooktitle{KDD '18: The 24th ACM SIGKDD International Conference on Knowledge Discovery \& Data Mining, August 19--23, 2018, London, United Kingdom}
\acmPrice{15.00}
\acmDOI{10.1145/3219819.3220023}
\acmISBN{978-1-4503-5552-0/18/08}

\maketitle

\section{introduction} \label{introduction}
Features play a central role in the success of many predictive systems. Because using raw features can rarely lead to optimal results, data scientists usually spend a lot of work on the transformation of raw features in order to generate best predictive systems \cite{he2014practical,lian2017restaurant} or to win data mining games \cite{liu2016repeat,Lian:2017:PLJ:3124791.3124794,lian2016cross}. One major type of feature transformation is the cross-product transformation over categorical features \cite{cheng2016wide}. These features are called \textsl{cross features} or \textsl{multi-way features}, they measure the interactions of multiple raw features. For instance, a 3-way feature {\fontfamily{qcr}\selectfont AND(user\_organization=msra, item\_category=deeplearning, time=monday)} has value 1 if the user works at Microsoft Research Asia and is shown a technical article about deep learning on a Monday.\\
\indent There are three major downsides for traditional cross feature engineering. First, obtaining high-quality features comes with a high cost. Because right features are usually task-specific, data scientists need spend a lot of time exploring the potential patterns from the product data before they become domain experts and extract meaningful cross features. Second, in large-scale predictive systems such as web-scale recommender systems, the huge number of raw features makes it infeasible to extract all cross features manually. Third, hand-crafted cross features do not generalize to unseen interactions in the training data. Therefore, learning to interact features without manual engineering is a meaningful task. \\
\indent Factorization Machines (FM) \cite{rendle2010factorization} embed each feature $i$ to a latent factor vector $\mathbf{v}_i = [v_{i1}, v_{i2}, ..., v_{iD}]$, and pairwise feature interactions are modeled as the inner product of latent vectors: $f^{(2)}(i,j)=\langle\mathbf{v}_i, \mathbf{v}_j\rangle x_ix_j$. In this paper we use the term \textsl{bit} to denote a element (such as $v_{i1}$) in latent vectors. The classical FM can be extended to arbitrary higher-order feature interactions \cite{blondel2016higher}, but one major downside is that, \cite{blondel2016higher} proposes to model all feature interactions, including both useful and useless combinations. As revealed in \cite{DBLP:conf/ijcai/XiaoY0ZWC17}, the interactions with useless features may introduce noises and degrade the performance. In recent years, deep neural networks (DNNs) have become successful in computer vision, speech recognition, and natural language processing with their great power of feature representation learning. It is promising to exploit DNNs to learn sophisticated and selective feature interactions. \cite{zhang2016deep} proposes a Factorisation-machine supported Neural Network (FNN) to learn high-order feature interactions. It uses the pre-trained factorization machines for field embedding before applying DNN. \cite{qu2016product} further proposes a Product-based Neural Network (PNN), which introduces a product layer between embedding layer and DNN layer, and does not rely on pre-trained FM. The major downside of FNN and PNN is that they focus more on high-order feature interactions while capture little low-order interactions. The Wide\&Deep \cite{cheng2016wide} and DeepFM \cite{guo2017deepfm} models overcome this problem by introducing hybrid architectures, which contain a shallow component and a deep component with the purpose of learning both memorization and generalization. Therefore they can jointly learn low-order and high-order feature interactions. \\
\indent All the abovementioned models leverage DNNs for learning high-order feature interactions. However, DNNs model high-order feature interactions in an implicit fashion. The final function learned by DNNs can be arbitrary, and there is no theoretical conclusion on what the maximum degree of feature interactions is. In addition, DNNs model feature interactions at the bit-wise level, which is different from the traditional FM framework which models feature interactions at the vector-wise level. Thus, in the field of recommender systems, whether DNNs are indeed the most effective model in representing high-order feature interactions remains an open question. In this paper, we propose a neural network-based model to learn feature interactions in an explicit, vector-wise fashion. Our approach is based on the Deep \& Cross Network (DCN) \cite{wang2017deep}, which aims to efficiently capture feature interactions of bounded degrees. However, we will argue in Section \ref{sec:explicit} that DCN will lead to a special format of interactions. We thus design a novel compressed interaction network (CIN) to replace the cross network in the DCN. CIN learns feature interactions explicitly, and the degree of interactions grows with the depth of the network. Following the spirit of the Wide\&Deep and DeepFM models, we combine the explicit high-order interaction module with implicit interaction module and traditional FM module, and name the joint model eXtreme Deep Factorization Machine (xDeepFM). The new model requires no manual feature engineering and release data scientists from tedious feature searching work. To summarize, we make the following contributions:
\begin{itemize}
    \item We propose a novel model, named eXtreme Deep Factorization Machine (xDeepFM), that jointly learns explicit and implicit high-order feature interactions effectively and requires no manual feature engineering.
    \item We design a compressed interaction network (CIN) in xDeepFM that learns high-order feature interactions explicitly. We show that the degree of feature interactions increases at each layer, and features interact at the vector-wise level rather than the bit-wise level.
    \item We conduct extensive experiments on three real-world dataset,  and the results demonstrate that our xDeepFM outperforms several state-of-the-art models significantly.
\end{itemize}
\indent The rest of this paper is organized as follows. Section \ref{preliminaries} provides some preliminary knowledge which is necessary for understanding deep learning-based recommender systems. Section \ref{mymodel} introduces our proposed CIN and xDeepFM model in detail. We will present experimental explorations on multiple datasets in Section \ref{experiments}. Related works are discussed in Section \ref{relatedwork}. Section \ref{conclusions} concludes this paper.
\section{Preliminaries}\label{preliminaries}
\subsection{Embedding Layer}\label{sec:emb}
In computer vision or natural language understanding, the input data are usually images or textual signals, which are known to be spatially and/or temporally correlated, so DNNs can be applied directly on the raw feature with dense structures. However, in web-scale recommender systems, the input features are sparse, of huge dimension, and present no clear spatial or temporal correlation. Therefore, \textsl{multi-field} categorical form is widely used by related works \cite{guo2017deepfm,shan2016deep,wang2017deep,qu2016product,zhang2016deep}. For example, one input instance \texttt{[user\_id=s02,gender=male,\\ organization=msra,interests=comedy\&rock]} is normally transformed into a high-dimensional sparse features via field-aware one-hot encoding:
\begin{equation*}
    [\underbrace{0,1,0,0,...,0}_{user id}]\ [\underbrace{1,0}_{gender}]\ [\underbrace{0,1,0,0,...,0}_{organization}]\ [\underbrace{0,1,0,1,...,0}_{interests}]
\end{equation*}
An embedding layer is applied upon the raw feature input to compress it to a low dimensional, dense real-value vector. If the field is univalent, the feature embedding is used as the field embedding. Take the above instance as an example, the embedding of feature \textsl{male} is taken as the embedding of field \textsl{gender}. If the field is multivalent, the sum of feature embedding is used as the field embedding. The embedding layer is illustrated in Figure \ref{fig:field_embedding}. The result of embedding layer is a wide concatenated vector:
\begin{equation*}
\mathbf{e}=[\mathbf{e}_1,\mathbf{e}_2,...,\mathbf{e}_m]    
\end{equation*}
where $m$ denotes the number of fields, and $\mathbf{e_i}\in \mathbb{R}^D$ denotes the embedding of one field. Although the feature lengths of instances can be various, their embeddings are of the same length $m\times D$, where $D$ is the dimension of field embedding.
\begin{figure}[htbp]
\centering
\includegraphics[width=0.4\textwidth]{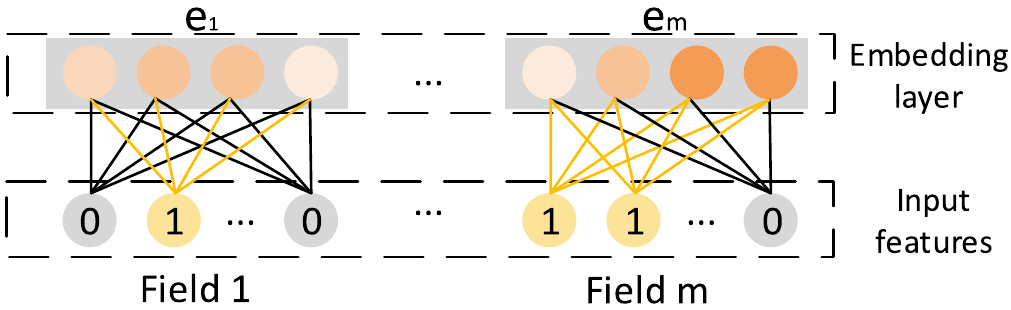}
\caption{The field embedding layer. The dimension of embedding in this example is 4.}
\label{fig:field_embedding}
\end{figure}
\subsection{Implicit High-order Interactions}\label{sec:implicit}
FNN \cite{zhang2016deep}, Deep Crossing \cite{shan2016deep}, and the deep part in Wide\&Deep \cite{cheng2016wide} exploit a feed-forward neural network on the field embedding vector $\mathbf{e}$ to learn high-order feature interactions. The forward process is :
\begin{equation}
    \mathbf{x}^1=\sigma (\mathbf{W}^{(1)}\mathbf{e}+\mathbf{b}^1)
\end{equation}
\begin{equation}
    \mathbf{x}^k=\sigma (\mathbf{W}^{(k)}\mathbf{x}^{(k-1)}+\mathbf{b}^k)
\end{equation}
where $k$ is the layer depth, $\sigma$ is an activation function, and $\mathbf{x}^k$ is the output of the $k$-th layer. The visual structure is very similar to what is shown in Figure \ref{fig:deepfm}, except that they do not include the \textsl{FM or Product layer}. This architecture models the interaction in a bit-wise fashion. That is to say, even the elements within the same field embedding vector will influence each other.  
\\
\indent PNN \cite{qu2016product} and DeepFM \cite{guo2017deepfm} modify the above architecture slightly. Besides applying DNNs on the embedding vector $\mathbf{e}$, they add a two-way interaction layer in the architecture. Therefore, both bit-wise and vector-wise interaction is included in their model. The major difference between PNN and DeepFM, is that PNN connects the outputs of product layer to the DNNs, whereas DeepFM connects the FM layer directly to the output unit (refer to Figure \ref{fig:deepfm}). 
\begin{figure}[htbp]
\centering
\includegraphics[width=0.48\textwidth]{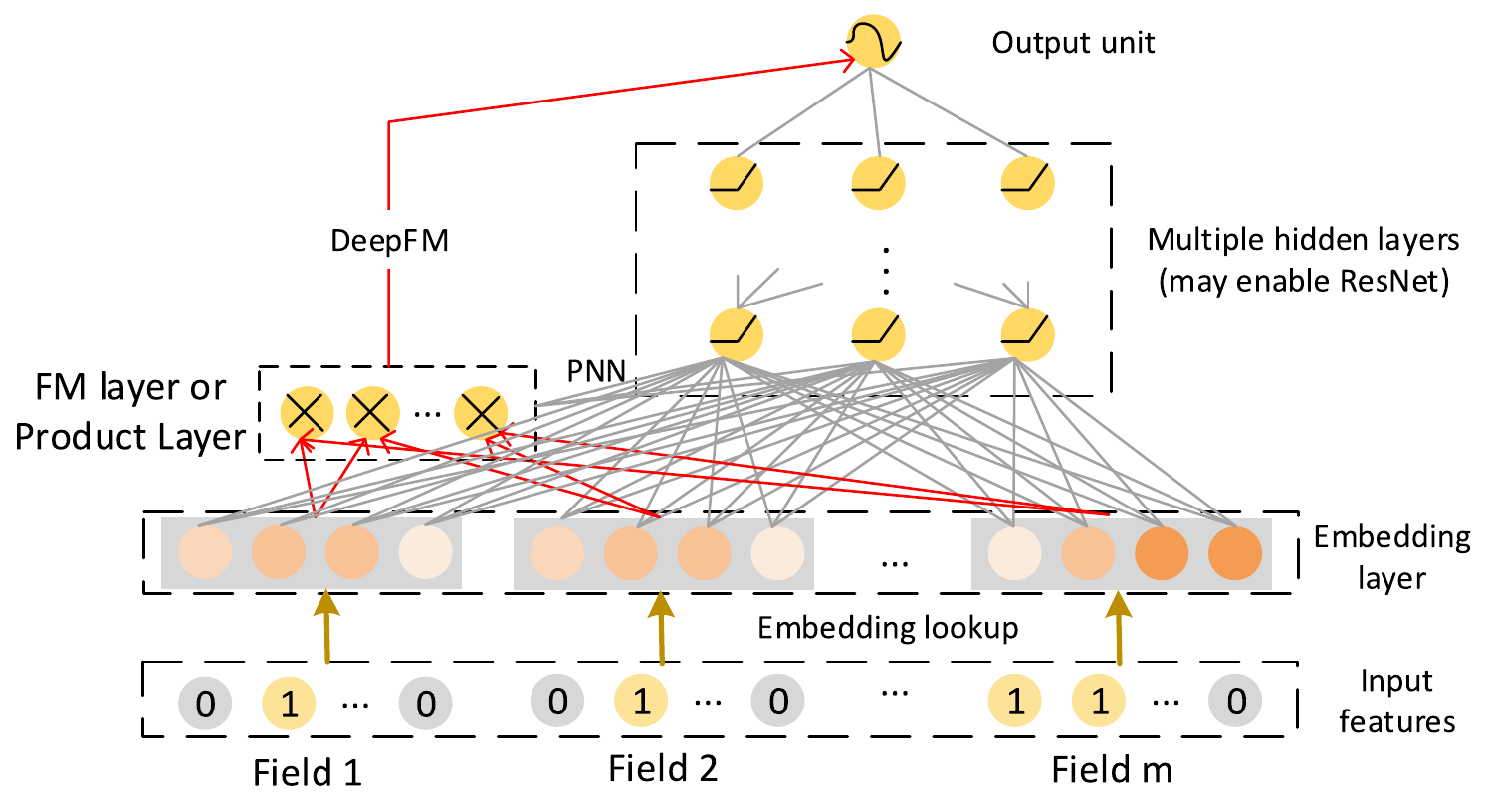}
\caption{The architecture of DeepFM (with linear part omitted) and PNN. We re-use the symbols in \cite{guo2017deepfm}, where red edges represent \textsl{weight-1 connections} (no parameters) and gray edges represent \textsl{normal connections} (network parameters).}
\label{fig:deepfm}
\end{figure}
\subsection{Explicit High-order Interactions}\label{sec:explicit}
\cite{wang2017deep} proposes the Cross Network (CrossNet) whose architecture is shown in Figure \ref{fig:crossnet}. It aims to explicitly model the high-order feature interactions. Unlike the classical fully-connected feed-forward network, the hidden layers are calculated by the following cross operation:
\begin{equation}
    \mathbf{x}_k = \mathbf{x}_0\mathbf{x}_{k-1}^T\mathbf{w}_k + \mathbf{b}_k + \mathbf{x}_{k-1}
\end{equation}
where $\mathbf{w}_k, \mathbf{b}_k, \mathbf{x}_k \in \mathbb{R}^{mD}$ are weights, bias and output of the $k$-th layer, respectively. We argue that the CrossNet learns a special type of high-order feature interactions, where each hidden layer in the CrossNet is a scalar multiple of $\mathbf{x}_{0}$.
\begin{figure}[htbp]
\centering
\includegraphics[width=0.31\textwidth]{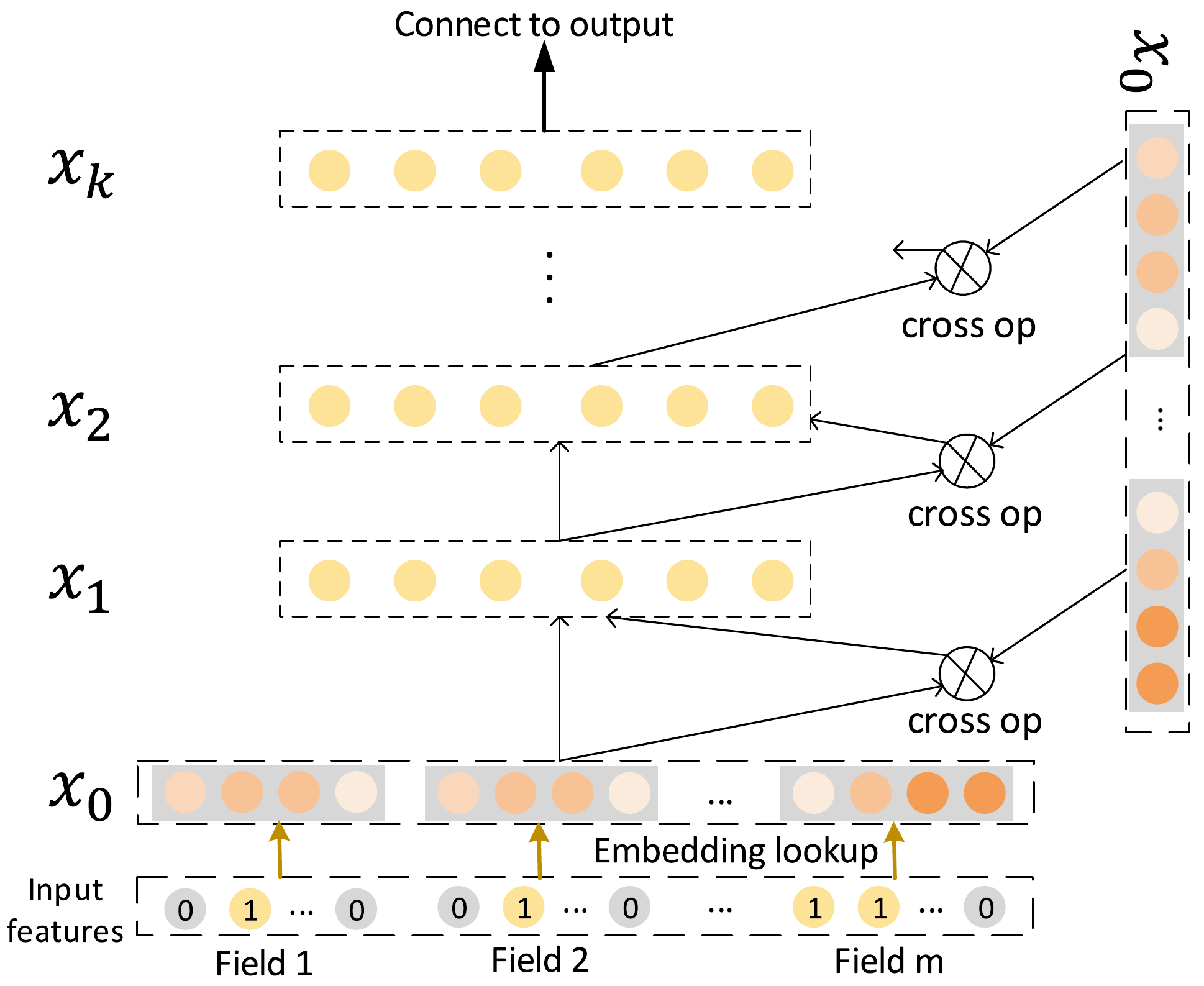}
\caption{The architecture of the Cross Network.}
\label{fig:crossnet}
\end{figure}
\begin{theorem}
Consider a $k$-layer cross network with the (i+1)-th layer defined as $\mathbf{x}_{i+1} = \mathbf{x}_0\mathbf{x}_{i}^T\mathbf{w}_{i+1} + \mathbf{x}_{i}$. Then, the output of the cross network $\mathbf{x}_k$ is a scalar multiple of $\mathbf{x}_0$. 
\end{theorem}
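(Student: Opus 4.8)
The plan is to prove by induction on the layer index $i$ the slightly stronger statement that $\mathbf{x}_i = \alpha^{(i)}\mathbf{x}_0$ for some scalar $\alpha^{(i)}\in\mathbb{R}$ (which is allowed to depend on $\mathbf{x}_0$ and on $\mathbf{w}_1,\dots,\mathbf{w}_i$, but is a single number, not a per-coordinate factor); the theorem is then the case $i=k$. The one algebraic fact that makes everything work is associativity of the rank-one product: although $\mathbf{x}_0\mathbf{x}_i^T$ is an $mD\times mD$ matrix, we have $(\mathbf{x}_0\mathbf{x}_i^T)\mathbf{w}_{i+1}=\mathbf{x}_0(\mathbf{x}_i^T\mathbf{w}_{i+1})$, and $\mathbf{x}_i^T\mathbf{w}_{i+1}$ is a scalar (an inner product in $\mathbb{R}^{mD}$). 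Hence a single cross step amounts to scaling $\mathbf{x}_0$ and adding back the previous layer.

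For the base case $i=1$, using $\mathbf{x}_1=\mathbf{x}_0\mathbf{x}_0^T\mathbf{w}_1+\mathbf{x}_0$, I would factor out $\mathbf{x}_0$ to get $\mathbf{x}_1=(\mathbf{x}_0^T\mathbf{w}_1+1)\,\mathbf{x}_0$, so $\alpha^{(1)}=\mathbf{x}_0^T\mathbf{w}_1+1$. For the inductive step, assume $\mathbf{x}_i=\alpha^{(i)}\mathbf{x}_0$ and substitute into the recurrence:
\[
\mathbf{x}_{i+1}=\mathbf{x}_0\,(\alpha^{(i)}\mathbf{x}_0)^T\mathbf{w}_{i+1}+\alpha^{(i)}\mathbf{x}_0
=\alpha^{(i)}\big(\mathbf{x}_0^T\mathbf{w}_{i+1}+1\big)\,\mathbf{x}_0 ,
\]
which both closes the induction and yields the explicit recurrence $\alpha^{(i+1)}=\alpha^{(i)}\big(\mathbf{x}_0^T\mathbf{w}_{i+1}+1\big)$. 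Unrolling it gives the closed form $\alpha^{(k)}=\prod_{i=1}^{k}\big(\mathbf{x}_0^T\mathbf{w}_i+1\big)$, which I would include as a cheap byproduct since it makes the "special format of interactions" claim from Section \ref{sec:explicit} concrete.

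I do not expect a genuine obstacle in this argument; the calculation is a two-line induction. The only point that needs care is interpretive rather than technical: one must not overstate the conclusion as saying the cross network is a fixed linear map. The proportionality coefficient $\alpha^{(k)}$ depends on $\mathbf{x}_0$ (it is a degree-$k$ polynomial in the entries of $\mathbf{x}_0$), so the map $\mathbf{x}_0\mapsto\mathbf{x}_k$ is nonlinear; what the theorem pins down is only that the output \emph{direction} is always forced to be that of $\mathbf{x}_0$. Making this distinction explicit is what motivates replacing CrossNet with the CIN in Section \ref{mymodel}, so I would state it clearly in a remark after the proof.
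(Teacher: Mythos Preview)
Your proposal is correct and follows essentially the same approach as the paper: an induction on the layer index with base case $\alpha^{(1)}=\mathbf{x}_0^T\mathbf{w}_1+1$ and inductive step $\alpha^{(i+1)}=\alpha^{(i)}(\mathbf{x}_0^T\mathbf{w}_{i+1}+1)$, using associativity to recognize $\mathbf{x}_i^T\mathbf{w}_{i+1}$ as a scalar. Your closed-form product and the interpretive remark about $\alpha^{(k)}$ depending nonlinearly on $\mathbf{x}_0$ are mild additions to, but entirely consistent with, the paper's own presentation.
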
 
\begin{proof}
When $k$=1, according to the associative law and distributive law for matrix multiplication, we have:
\begin{equation}
\begin{split}
    \mathbf{x}_1 & = \mathbf{x}_0(\mathbf{x}_{0}^T\mathbf{w}_1) + \mathbf{x}_{0} \\
                & = \mathbf{x}_0(\mathbf{x}_{0}^T\mathbf{w}_1 + 1)  \\
                & =  \alpha^1 \mathbf{x}_0
\end{split}
\end{equation}
where the scalar $\alpha^1 = \mathbf{x}_{0}^T\mathbf{w}_1 + 1$ is actually a linear regression of $ \mathbf{x}_0$. Thus, $ \mathbf{x}_1$ is a scalar multiple of $\mathbf{x}_0$. Suppose the scalar multiple statement holds for $k$=$i$. For $k$=$i+1$, we have :
\begin{equation}
\begin{split}
    \mathbf{x}_{i+1} & = \mathbf{x}_0\mathbf{x}_{i}^T\mathbf{w}_{i+1} + \mathbf{x}_{i} \\
                & = \mathbf{x}_0((\alpha^{i}\mathbf{x}_{0})^T\mathbf{w}_{i+1}) + \alpha^{i}\mathbf{x}_{0} \\
                & =  \alpha^{i+1} \mathbf{x}_0
\end{split}
\end{equation}
where,  $\alpha^{i+1} = \alpha^i(\mathbf{x}_{0}^T\mathbf{w}_{i+1}+1)$ is a scalar. Thus $\mathbf{x}_{i+1}$ is still a scalar multiple of $\mathbf{x}_{0}$. By induction hypothesis, the output of cross network $\mathbf{x}_k$ is a scalar multiple of $\mathbf{x}_0$.
\end{proof}
Note that the \textsl{scalar multiple} does not mean $\mathbf{x}_k$ is linear with $\mathbf{x}_0$. The coefficient $\alpha^{i+1}$ is sensitive with $\mathbf{x}_0$. The CrossNet can learn feature interactions very efficiently (the complexity is negligible compared with a DNN model), however the downsides are: (1) the output of CrossNet is limited in a special form, with each hidden layer is a scalar multiple of $\mathbf{x}_{0}$; (2) interactions come in a bit-wise fashion.

\section{Our proposed model}\label{mymodel}
\begin{figure*}[htbp]
\centering
\begin{subfigure}{.33\textwidth}
  \centering
  \includegraphics[width=0.85\textwidth]{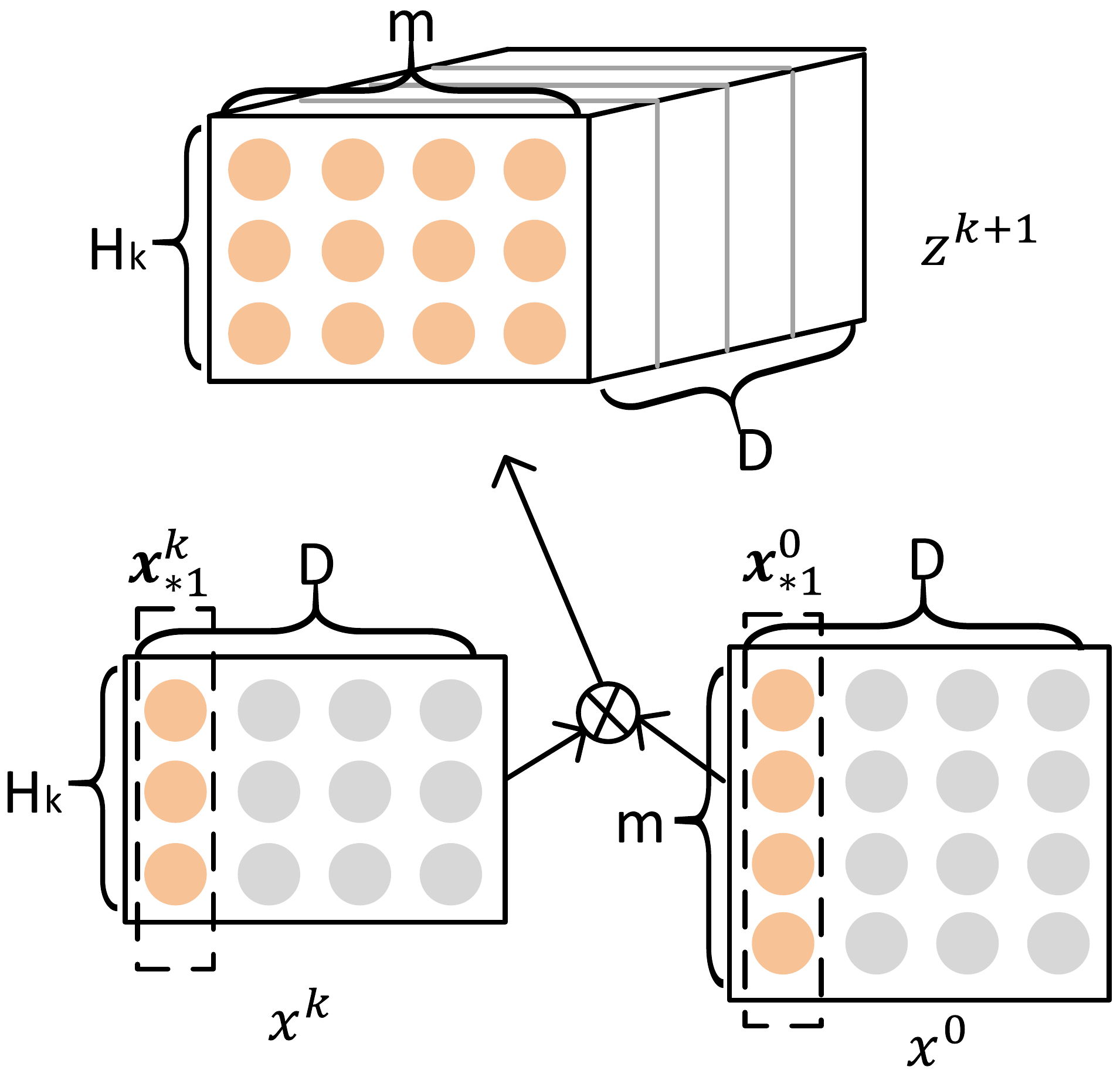}
  \caption{Outer products along each dimension for feature interactions. The tensor $\mathbf{Z}^{k+1}$ is an intermediate result for further learning.}
  \label{fig:CIN1}
\end{subfigure} \hfill  
\begin{subfigure}{.32\textwidth}
  \centering
  \includegraphics[width=0.85\textwidth]{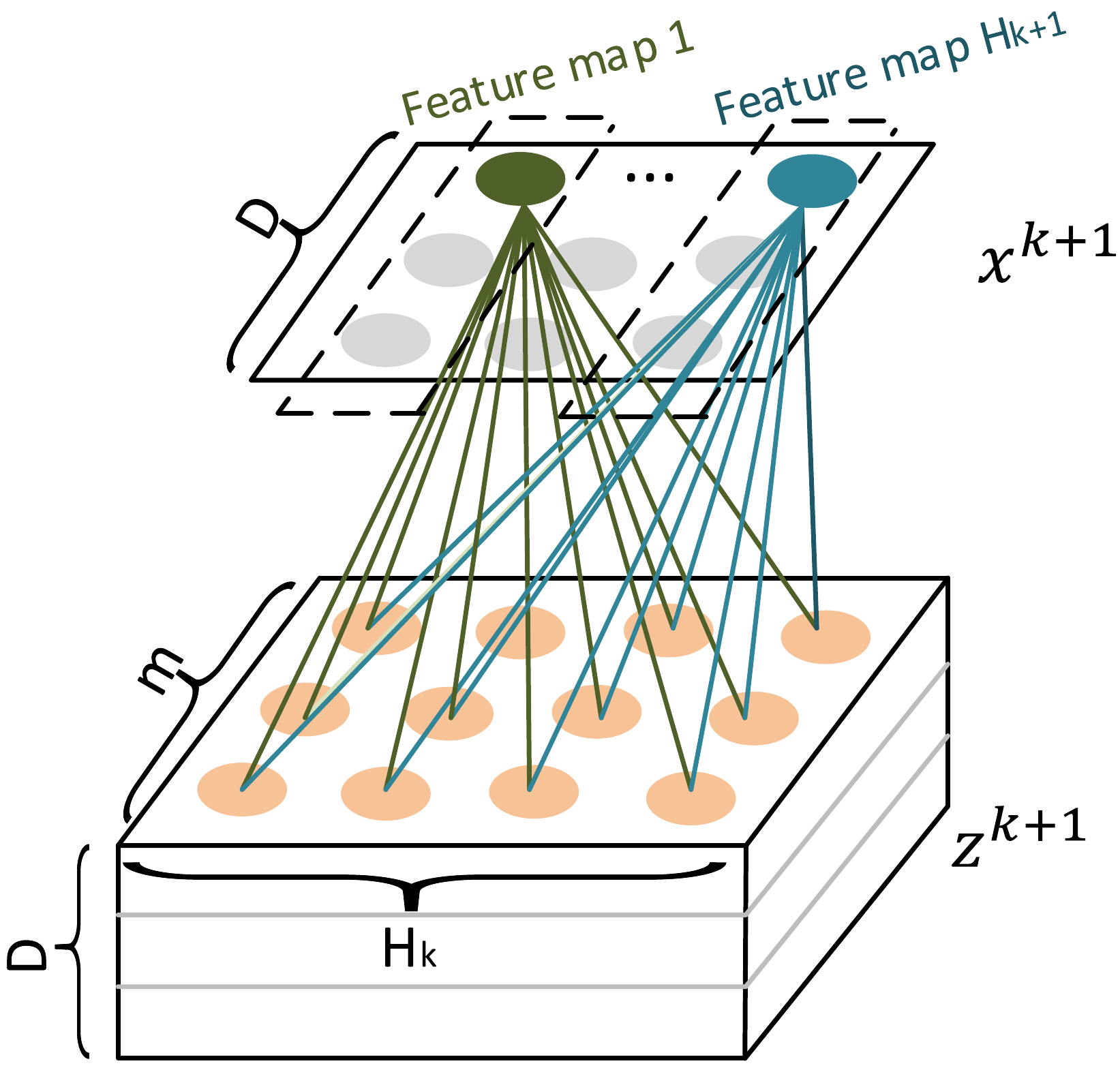}
  \caption{The $k$-th layer of CIN. It compresses the intermediate tensor $\mathbf{Z}^{k+1}$ to $H_{k+1}$ embedding vectors (aslo known as \textsl{feature maps}).}
  \label{fig:CIN2}
\end{subfigure}   
\begin{subfigure}{.32\textwidth}
  \centering
  \includegraphics[width=0.85\textwidth]{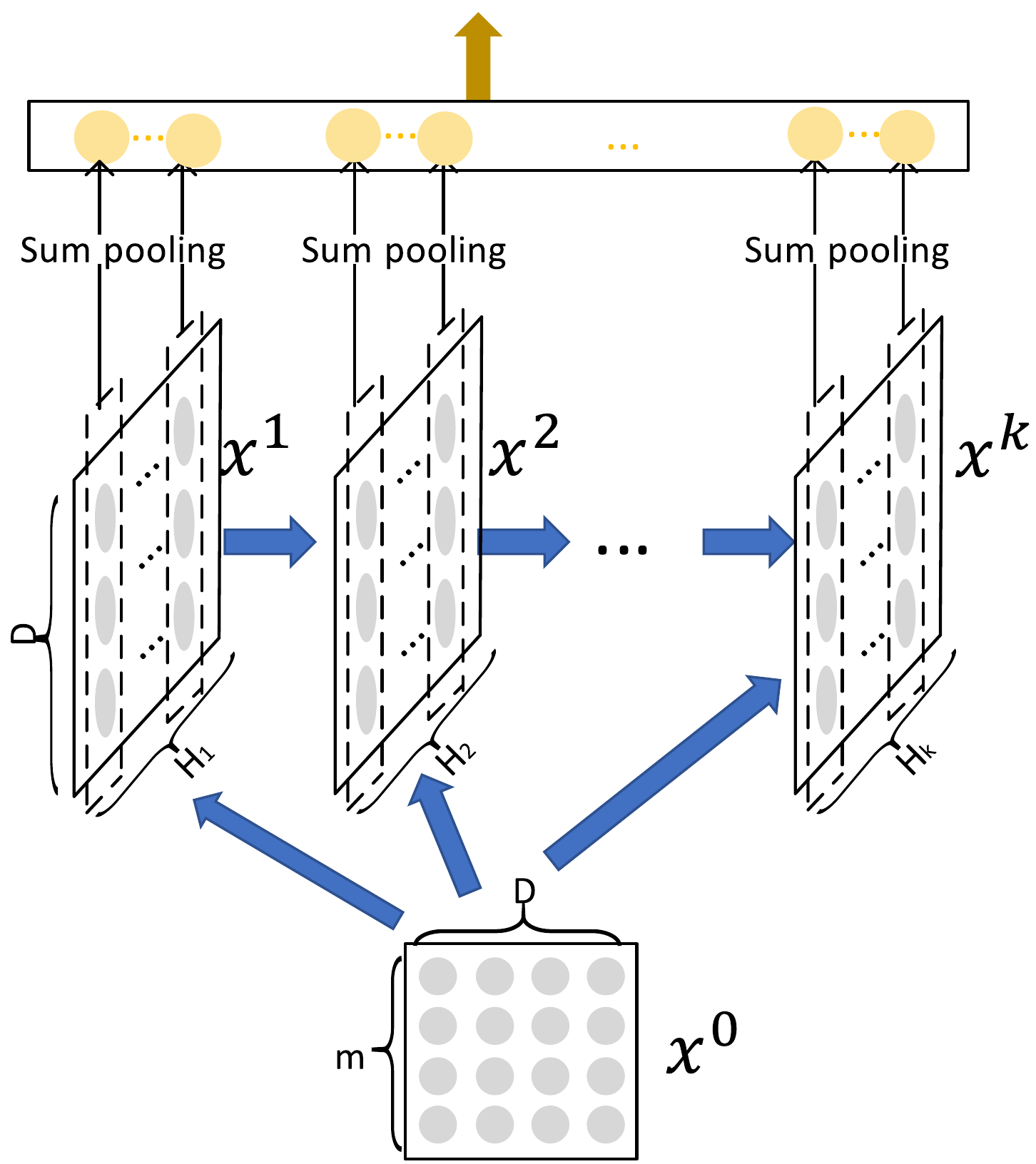}
  \caption{An overview of the CIN architecture.}
  \label{fig:CIN3}
\end{subfigure} 
\caption{Components and architecture of the Compressed Interaction Network (CIN).}
\label{fig:CIN}
\end{figure*}
\subsection{Compressed Interaction Network}
We design a new cross network, named Compressed Interaction Network (CIN), with the following considerations: (1) interactions are applied at vector-wise level, not at bit-wise level; (2) high-order feature interactions is measured explicitly; (3) the complexity of network will not grow exponentially with the degree of interactions.\\
\indent Since an embedding vector is regarded as a unit for vector-wise interactions, hereafter we formulate the output of field embedding as a matrix $\mathbf{X}^0 \in \mathbb{R}^{m\times D}$, where the $i$-th row in $\mathbf{X}^0$ is the embedding vector of the $i$-th field: $\mathbf{X}^0_{i,*} = \mathbf{e}_i$, and $D$ is the dimension of the field embedding. The output of the $k$-th layer in CIN is also a matrix $\mathbf{X}^k \in \mathbb{R}^{H_k\times D}$, where $H_k$ denotes the number of (embedding) feature vectors in the $k$-th layer and we let $H_0=m$. For each layer, $\mathbf{X}^k$ are calculated via:
\begin{equation}
\label{eq:xk}
    \mathbf{X}^k_{h,*} = \sum_{i=1}^{H_{k-1}}\sum_{j=1}^{m}\mathbf{W}^{k,h}_{ij}(\mathbf{X}^{k-1}_{i,*}\circ \mathbf{X}^0_{j,*})  
\end{equation}
where $1\leq h \leq H_k$, $\mathbf{W}^{k,h}\in \mathbb{R}^{H_{k-1}\times m}$ is the parameter matrix for the $h$-th feature vector, and $\circ$ denotes the Hadamard product, for example, $\langle a_1,a_2,a_3 \rangle \circ \langle b_1,b_2,b_3\rangle = \langle a_1b_1,a_2b_2,a_3b_3\rangle$. Note that $\mathbf{X}^k$ is derived via the interactions between $\mathbf{X}^{k-1}$ and $\mathbf{X}^0$, thus feature interactions are measured explicitly and the degree of interactions increases with the layer depth. The structure of CIN is very similar to the Recurrent Neural Network (RNN), where the outputs of the next hidden layer are dependent on the last hidden layer and an additional input. We hold the structure of embedding vectors at all layers, thus the interactions are applied at the vector-wise level.\\
\indent It is interesting to point out that Equation \ref{eq:xk} has strong connections with the well-known Convolutional Neural Networks (CNNs) in computer vision. As shown in Figure \ref{fig:CIN1}, we introduce an intermediate tensor $\mathbf{Z}^{k+1}$, which is the outer products (along each embedding dimension) of hidden layer $\mathbf{X}^k$ and original feature matrix $\mathbf{X}^0$. Then $\mathbf{Z}^{k+1}$ can be regarded as a special type of image and  $\mathbf{W}^{k,h}$ is a filter. We slide the filter across $\mathbf{Z}^{k+1}$ along the embedding dimension (D) as shown in Figure \ref{fig:CIN2}, and get an hidden vector $\mathbf{X}^{k+1}_{i,*}$, which is usually called a \textsl{feature map} in computer vision. Therefore, $\mathbf{X}^k$ is a collection of $H_k$ different \textsl{feature maps}. The term ``\textsl{compressed}" in the name of CIN indicates that the $k$-th hidden layer compress the potential space of $H_{k-1}\times m$ vectors down to $H_k$ vectors.\\
\indent Figure \ref{fig:CIN3} provides an overview of the architecture of CIN. Let T denotes the depth of the network. Every hidden layer $\mathbf{X}^k, k\in [1,T]$ has a connection with output units. We first apply sum pooling on each feature map of the hidden layer:
\begin{equation}
    {p}^k_i = \sum_{j=1}^D \mathbf{X}^k_{i,j}
\end{equation}
for $i\in [1,H_k]$. Thus, we have a pooling vector $\mathbf{p}^k = [p^k_1, p^k_2, ..., p^k_{H_k}]$ with length $H_k$ for the $k$-th hidden layer. All pooling vectors from hidden layers are concatenated before connected to output units: $\mathbf{p}^+ = [\mathbf{p}^1,\mathbf{p}^2,...,\mathbf{p}^T] \in \mathbb{R}^{\sum_{i=1}^T H_i}$. If we use CIN directly for binary classification, the output unit is a sigmoid node on $\mathbf{p}^+$:
\begin{equation}
    y=\frac{1}{1+exp(\mathbf{p^+}^T\mathbf{w}^{o})}
\end{equation}
where $\mathbf{w}^{o}$ are the regression parameters.
\subsection{CIN Analysis}
We analyze the proposed CIN to study the model complexity and the potential effectiveness.
\subsubsection{Space Complexity}
The $h$-th feature map at the $k$-th layer contains $H_{k-1} \times m$ parameters, which is exactly the size of $\mathbf{W}^{k,h}$. Thus, there are $H_{k} \times H_{k-1} \times m$ parameters at the $k$-th layer. Considering the last regression layer for the output unit, which has $\sum_{k=1}^{T}H_{k}$ parameters, the total number of parameters for CIN is $\sum_{k=1}^{T}H_{k} \times (1 + H_{k-1} \times m)$. Note that CIN is independent of the embedding dimension $D$. In contrast, a plain $T$-layers DNN contains $m \times D \times H_1 + H_T + \sum_{k=2}^{T}H_{k} \times H_{k-1}$ parameters, and the number of parameters will increase with the embedding dimension $D$. \\
\indent Usually $m$ and $H_k$ will not be very large, so the scale of $\mathbf{W}^{k,h}$ is acceptable. When necessary, we can exploit a $L$-order decomposition and replace $\mathbf{W}^{k,h}$ with two smaller matrices $\mathbf{U}^{k,h} \in \mathbb{R}^{H_{k-1} \times L}$ and $\mathbf{V}^{k,h} \in \mathbb{R}^{m\times L}$:
\begin{equation}
    \mathbf{W}^{k,h} = \mathbf{U}^{k,h} (\mathbf{V}^{k,h})^T
\end{equation}
where $L\ll H$ and $L\ll m$. Hereafter we assume that each hidden layer has the same number (which is $H$) of feature maps for simplicity. Through the $L$-order decomposition, the space complexity of CIN is reduced from $O(mTH^2)$ to $O(mTHL+TH^2L)$. In contrast, the space complexity of the plain DNN is $O(mDH+TH^2)$, which is sensitive to the dimension (D) of field embedding. 
\subsubsection{Time Complexity}\label{sec:time_complexity}
The cost of computing tensor $\mathbf{Z}^{k+1}$ (as shown in Figure \ref{fig:CIN1}) is $O(mHD)$ time. Because we have $H$ feature maps in one hidden layer, computing a $T$-layers CIN takes $O(mH^2DT)$ time. A $T$-layers plain DNN, by contrast, takes $O(mHD + H^2T)$ time. Therefore, the major downside of CIN lies in the time complexity. 
\subsubsection{Polynomial Approximation}
Next we examine the high-order interaction properties of CIN. For simplicity, we assume that numbers of feature maps at hidden layers are all equal to the number of fields $m$. Let $[m]$ denote the set of positive integers that are less than or equal to $m$. The $h$-th feature map at the first layer, denoted as  $\mathbf{x}^{1}_h\ \in \mathbb{R}^{D}$, is calculated via:
\begin{equation}
    \mathbf{x}^{1}_h = \sum_{\substack{i\in [m]\\ j\in [m]}} \mathbf{W}^{1,h}_{i,j}(\mathbf{x}^0_i \circ \mathbf{x}^0_j)
\end{equation}
Therefore, each feature map at the first layer models pair-wise interactions with $O(m^2)$ coefficients. Similarly, the $h$-th feature map at the second layer is:
\begin{equation}
\label{eq:level2}
\begin{split}
    \mathbf{x}^{2}_h & = \sum_{\substack{i\in [m]\\ j\in [m]}} \mathbf{W}^{2,h}_{i,j}(\mathbf{x}^1_i \circ \mathbf{x}^0_j) \\
        & = \sum_{\substack{i\in [m]\\ j\in [m]}} \sum_{\substack{l\in [m]\\ k\in [m]}} \mathbf{W}^{2,h}_{i,j}\mathbf{W}^{1,i}_{l,k} (\mathbf{x}^0_j \circ \mathbf{x}^0_k \circ \mathbf{x}^0_l)
\end{split}
\end{equation}
Note that all calculations related to the subscript $l$ and $k$ is already finished at the previous hidden layer. We expand the factors in Equation \ref{eq:level2} just for clarity. We can observe that each feature map at the second layer models 3-way interactions with $O(m^2)$ new parameters. \\
\indent A classical $k$-order polynomial has $O(m^k)$ coefficients. We show that CIN approximate this class of polynomial with only $O(km^3)$ parameters in terms of a chain of feature maps. By induction hypothesis, we can prove that the $h$-th feature map at the $k$-th layer is:
\begin{equation}
\label{eq:levelk}
\begin{split}
    \mathbf{x}^{k}_h & = \sum_{\substack{i\in [m]\\ j\in [m]}} \mathbf{W}^{k,h}_{i,j}(\mathbf{x}^{k-1}_i \circ \mathbf{x}^0_j) \\
        & = \sum_{\substack{i\in [m]\\ j\in [m]}} ... \sum_{\substack{r\in [m]\\ t\in [m]}} \sum_{\substack{l\in [m]\\ s\in [m]}} \mathbf{W}^{k,h}_{i,j}...\mathbf{W}^{1,r}_{l,s} (\underbrace{\mathbf{x}^0_j \circ ... \circ \mathbf{x}^0_s \circ \mathbf{x}^0_l}_{k\ vectors})
\end{split}
\end{equation}
For better illustration, here we borrow the notations from \cite{wang2017deep}. Let $\boldsymbol{\alpha}=[\alpha_1,...,\alpha_m] \in \mathbb{N}^d$ denote a multi-index, and $|\boldsymbol{\alpha}|=\sum_{i=1}^m \alpha_i$. We omit the original superscript from $\mathbf{x}^0_i$, and use $\mathbf{x}_i$ to denote it since we only we the feature maps from the $0$-th layer (which is exactly the field embeddings) for the final expanded expression (refer to Eq. \ref{eq:levelk}). Now a superscript is used to denote the vector operation, such as $\mathbf{x}^3_i=\mathbf{x}_i \circ \mathbf{x}_i \circ \mathbf{x}_i$. Let $VP_k(\mathbf{X})$ denote a multi-vector polynomial of degree $k$:
\begin{equation}  
VP_k(\mathbf{X})= \left\{ \left. \sum_{\boldsymbol{\alpha}} w_{\boldsymbol{\alpha}} \mathbf{x}_1^{\alpha_1} \circ \mathbf{x}_2^{\alpha_2} \circ ... \circ \mathbf{x}_m^{\alpha_m}  \right| 2 \leqslant |\boldsymbol{\alpha}| \leqslant k \right\}
\end{equation}
Each vector polylnomial in this class has $O(m^k)$ coefficients. Then, our CIN approaches the coefficient $w_{\boldsymbol{\alpha}}$ with:
\begin{equation} 
 \hat w_{\boldsymbol{\alpha}} = \sum_{i=1}^{m} \sum_{j=1}^{m} \sum_{B\in P_{\boldsymbol{\alpha}}} \prod_{t=2}^{|\boldsymbol{\alpha}|} \mathbf{W}^{t,j}_{i,B_t}
\end{equation}
where, $B=[B_1, B_2, ..., B_{|\boldsymbol{\alpha}|}]$ is a multi-index, and $P_{\boldsymbol{\alpha}}$ is the set of all the permutations of the indices $ (\underbrace{1,...1}_{\alpha_1\ times},...,\underbrace{m,...,m}_{\alpha_m\ times})$.

\subsection{Combination with Implicit Networks}
As discussed in Section \ref{sec:implicit}, plain DNNs learn implicit high-order feature interactions. Since CIN and plain DNNs can complement each other, an intuitive way to make the model stronger is to combine these two structures. The resulting model is very similar to the Wide\&Deep or DeepFM model. The architecture is shown in Figure \ref{fig:xDeepFM}. We name the new model eXtreme Deep Factorization Machine (xDeepFM), considering that on one hand, it includes both low-order and high-order feature interactions; on the other hand, it includes both implicit feature interactions and explicit feature interactions. Its resulting output unit becomes:
\begin{equation}
    \hat y=\sigma(\mathbf{w}_{linear}^T\mathbf{a}+\mathbf{w}^T_{dnn}\mathbf{x}^k_{dnn}+\mathbf{w}^T_{cin}\mathbf{p}^+ + b)
\end{equation}
where $\sigma$ is the sigmoid function, $\mathbf{a}$ is the raw features. $\mathbf{x}^k_{dnn}, \mathbf{p}^+$ are the outputs of the plain DNN and CIN, respectively. $\mathbf{w}_*$ and $b$ are learnable parameters. For binary classifications, the loss function is the log loss:
\begin{equation}
    \mathcal{L}=-\frac{1}{N}\sum_{i=1}^{N}y_i log \hat y_i + (1-y_i)log(1-\hat y_i)
\end{equation}
where $N$ is the total number of training instances. The optimization process is to minimize the following objective function:
\begin{equation}
    \mathcal{J}=\mathcal{L} + \lambda_* ||\Theta||
\end{equation}
where $\lambda_*$ denotes the regularization term and $\Theta$ denotes the set of parameters, including these in linear part, CIN part, and DNN part.
\begin{figure}[htbp]
\centering
\includegraphics[width=0.45\textwidth]{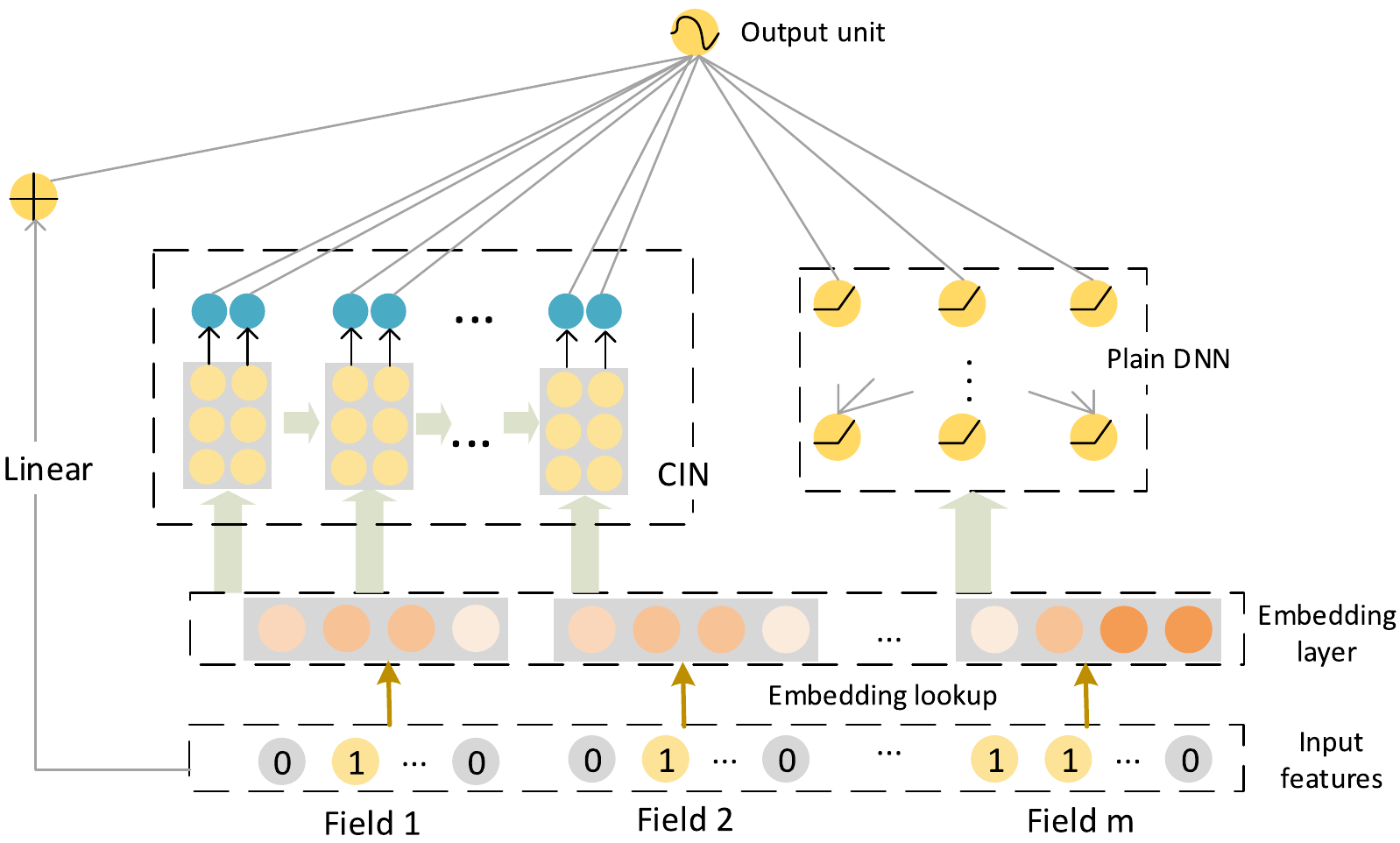}
\caption{The architecture of xDeepFM.}
\label{fig:xDeepFM}
\end{figure}
\subsubsection{Relationship with FM and DeepFM}
Suppose all fields are univalent. It's not hard to observe from Figure \ref{fig:xDeepFM} that, when the depth and feature maps of the CIN part are both set to 1, xDeepFM is a generalization of DeepFM by learning the linear regression weights for the FM layer (note that in DeepFM, units of FM layer are directly linked to the output unit without any coefficients). When we further remove the DNN part, and at the same time use a constant \textsl{sum filter} (which simply takes the sum of inputs without any parameter learning) for the feature map, then xDeepFM is downgraded to the traditional FM model.
\section{Experiments}\label{experiments}
In this section, we conduct extensive experiments to answer the following questions:
\begin{itemize}
    \item \textbf{(Q1)} How does our proposed CIN perform in high-order feature interactions learning?
    \item \textbf{(Q2)} Is it necessary to combine explicit and implicit high-order feature interactions for recommender systems?
    \item \textbf{(Q3)} How does the settings of networks influence the performance of xDeepFM?
\end{itemize}
We will answer these questions after presenting some fundamental experimental settings.
\subsection{Experiment Setup}
\subsubsection{Datasets.} We evaluate our proposed models on the following three datasets:\\
\indent \textbf{1. Criteo Dataset}. It is a famous industry benchmarking dataset for developing models predicting ad click-through rate, and is publicly accessible\footnote{http://labs.criteo.com/2014/02/kaggle-display-advertising-challenge-dataset/}. Given a user and the page he is visiting, the goal is to predict the probability that he will clik on a given ad. \\
\indent \textbf{2. Dianping Dataset}. \textsl{Dianping.com} is the largest consumer review site in China. It provides diverse functions such as reviews, check-ins, and shops' meta information (including geographical messages and shop attributes). We collect 6 months' users check-in activities for restaurant recommendation experiments. Given a user's profile, a restaurant's attributes and the user's last three visited POIs (point of interest), we want to predict the probability that he will visit the restaurant. For each restaurant in a user's check-in instance, we sample four restaurants which are within 3 kilometers as negative instances by POI popularity.\\
\indent \textbf{3. Bing News Dataset}. Bing News\footnote{https://www.bing.com/news} is part of Microsoft's Bing search engine. In order to evaluate the performance of our model in a real commercial dataset, we collect five consecutive days' impression logs on news reading service. We use the first three days' data for training and validation, and the next two days for testing.  \\
\indent For the Criteo dataset and the Dianping dataset, we randomly split instances by 8:1:1 for training , validation and test. The characteristics of the three datasets are summarized in Table \ref{tab:datasets}.
  \begin{table}[ht]
     \centering
     \caption{Statistics of the evaluation datasets. M indicates million and K indicates thousand. }
     \begin{tabular}{|c|c|c|c|} \hline
        Datasest  &  \#instances & \#fields &\#features (sparse)  \\ \hline    
        Criteo    &  45M & 39 & 2.3M \\ \hline           
        Dianping &  1.2M & 18 & 230K \\ \hline   
        Bing News &  5M & 45 & 17K \\  \hline
     \end{tabular}
     \label{tab:datasets} 
 \end{table}
\subsubsection{Evaluation Metrics.} We use two metrics for model evaluation: \textbf{AUC} (Area Under the ROC curve) and \textbf{Logloss} (cross entropy). These two metrics evaluate the performance from two different angels: AUC measures the probability that a positive instance will be ranked higher than a randomly chosen negative one. It only takes into account the order of predicted instances and is insensitive to class imbalance problem. Logloss, in contrast, measures the distance between the predicted score and the true label for each instance. Sometimes we rely more on Logloss because we need to use the predicted probability to estimate the benefit of a ranking strategy (which is usually adjusted as CTR $\times$ bid).
\subsubsection{Baselines.} We compare our xDeepFM with LR(logistic regression), FM, DNN (plain deep neural network), PNN (choose the better one from iPNN and oPNN) \cite{qu2016product}, Wide \& Deep \cite{cheng2016wide}, DCN (Deep \& Cross Network) \cite{wang2017deep} and DeepFM \cite{guo2017deepfm}. As introduced and discussed in Section \ref{preliminaries}, these models are highly related to our xDeepFM and some of them are state-of-the-art models for recommender systems. Note that the focus of this paper is to learn feature interactions automatically, so we do not include any hand-crafted cross features.  
\subsubsection{Reproducibility} We implement our method using Tensorflow\footnote{https://www.tensorflow.org/}. Hyper-parameters of each model are tuned by grid-searching on the validation set, and the best settings for each model will be shown in corresponding sections. Learning rate is set to \textsl{0.001}. For optimization method, we use the Adam \cite{kingma2014adam} with a mini-batch size of 4096. We use a L2 regularization with $\lambda=0.0001$ for DNN, DCN, Wide\&Deep, DeepFM and xDeepFM, and use dropout 0.5 for PNN. The default setting for number of neurons per layer is: (1) 400 for DNN layers; (2) 200 for CIN layers on Criteo dataset, and 100 for CIN layers on Dianping and Bing News datasets. Since we focus on neural networks structures in this paper, we make the dimension of field embedding for all models be a fixed value of 10. We conduct experiments of different settings in parallel with 5 \textsl{Tesla K80 GPUs}.
The source code is available at \textsl{\url{https://github.com/Leavingseason/xDeepFM}}. 

  \begin{table}[ht]
     \centering
     \caption{Performance of individual models on the Criteo, Dianping, and Bing News datasets. Column \textsl{Depth} indicates the best network depth for each model.  }
     \begin{tabular}{c|cc|c} \hline\hline  
        Model name &  AUC & Logloss & Depth \\ \hline  
        \multicolumn{4}{c}{Criteo}\\ \hline  
        FM    & 0.7900   & 0.4592  &  -  \\ \hline           
        DNN & 0.7993  &  0.4491 &  2 \\ \hline   
        CrossNet &  0.7961 & 0.4508 & 3 \\  \hline
        CIN & \textbf{0.8012}  &  0.4493 &  3 \\ \hline   
         \multicolumn{4}{c}{Dianping}\\ \hline  
         FM    & 0.8165  & 0.3558  &  -  \\ \hline           
        DNN & 0.8318  &  0.3382 &  3 \\ \hline   
        CrossNet &  0.8283 & 0.3404 & 2 \\  \hline
        CIN & \textbf{0.8576}  &  \textbf{0.3225} &  2 \\ \hline    
          \multicolumn{4}{c}{Bing News}\\ \hline  
         FM    & 0.8223 & 0.2779  &  -  \\ \hline           
        DNN & 0.8366  &  0.273 &  2 \\ \hline   
        CrossNet &  0.8304 & 0.2765 & 6 \\  \hline
        CIN & \textbf{0.8377}  &  \textbf{0.2662} &  5 \\ \hline \hline  
     \end{tabular}
     \label{tab:cin} 
 \end{table}
   \begin{table*}[th]
    \centering
    \caption{Overall performance of different models on Criteo, Dianping and Bing News datasets. The column \textsl{Depth} presents the best setting for network depth with a format of (cross layers, DNN layers).}
     {\setlength{\tabcolsep}{1em}{\renewcommand{\arraystretch}{1.1}
     \begin{tabular}{c|cc|c||cc|c||cc|c} \hline\hline  
         & \multicolumn{3}{c||}{Criteo} & \multicolumn{3}{c||}{Dianping} & \multicolumn{3}{c}{Bing News}  \\ \hline  
        Model name &  AUC & Logloss & Depth&  AUC & Logloss & Depth &  AUC & Logloss & Depth  \\ \hline  
        LR  & 0.7577 & 0.4854   &    -,-  & 0.8018 & 0.3608  &    -,-  & 0.7988 & 0.2950  &    -,-   \\ \hline       
        FM    & 0.7900  & 0.4592  &  -,-  & 0.8165  & 0.3558  &  -,- & 0.8223 & 0.2779 &  -,-  \\ \hline    
        DNN & 0.7993  &  0.4491 &  -,2  & 0.8318  &  0.3382 & -,3 & 0.8366  &  0.2730 & -,2 \\ \hline   
        DCN &  0.8026 & 0.4467 & 2,2 &  0.8391 & 0.3379 & 4,3&  0.8379 & 0.2677 & 2,2  \\  \hline
        Wide\&Deep & 0.8000 & 0.4490 & -,3& 0.8361 & 0.3364 & -,2 & 0.8377 & 0.2668 & -,2 \\  \hline
        PNN &  0.8038 & 0.4927 & -,2 &  0.8445 & 0.3424 & -,3&  0.8321 & 0.2775 & -,3\\  \hline
        DeepFM &  0.8025 & 0.4468 & -,2&  0.8481 & 0.3333 & -,2 &  0.8376 & 0.2671 & -,3  \\  \hline
        xDeepFM & \textbf{0.8052}  &  \textbf{0.4418} &  3,2& \textbf{0.8639}  &  \textbf{0.3156} &  3,3 & \textbf{0.8400}  &  \textbf{0.2649} &  3,2  \\ \hline   \hline   
     \end{tabular}
     }}
     \label{tab:overallperformance} 
\end{table*}
\subsection{Performance Comparison among Individual Neural Components (Q1)}
We want to know how CIN performs individually. Note that FM measures 2-order feature interactions explicitly, DNN model high-order feature interactions implicitly, CrossNet tries to model high-order feature interactions with a small number of parameters (which is proven not effective in Section \ref{sec:explicit}), and CIN models high-order feature interactions explicitly. There is no theoretic guarantee of the superiority of one individual model over the others, due to that it really depends on the dataset. For example, if the practical dataset does not require high-order feature interactions, FM may be the best individual model. Thus we do not have any expectation for which model will perform the best in this experiment. 
 
 \indent Table \ref{tab:cin} shows the results of individual models on the three practical datasets. Surprisingly, our CIN outperform the other models consistently. On one hand, the results indicate that for practical datasets, higher-order interactions over sparse features are necessary, and this can be verified through the fact that DNN, CrossNet and CIN outperform FM significantly on all the three datasets.  On the other hand, CIN is the best individual model, which demonstrates the effectiveness of CIN on modeling explicit high-order feature interactions. Note that a $k$-layer CIN can model $k$-degree feature interactions. It is also interesting to see that it take 5 layers for CIN to yield the best result ON the Bing News dataset.
\subsection{Performance of Integrated Models (Q2)}
xDeepFM integrates CIN and DNN into an end-to-end model. While CIN and DNN covers two distinct properties in learning feature interactions, we are interested to know whether it is indeed necessary and effective to combine them together for jointly explicit and implicit learning. Here we compare several strong baselines which are not limited to individual models, and the results are shown in Table \ref{tab:overallperformance}. We observe that LR is far worse than all the rest models, which demonstrates that factorization-based models are essential for measuring sparse features. Wide\&Deep, DCN, DeepFM and xDeepFM are significantly better than DNN, which directly reflects that,  despite their simplicity, incorporating hybrid components are important for boosting the accuracy of predictive systems. Our proposed xDeepFM achieves the best performance on all datasets, which demonstrates that combining explicit and implicit high-order feature interaction is necessary, and xDeepFM is effective in learning this class of combination. Another interesting observation is that, all the neural-based models do not require a very deep network structure for the best performance. Typical settings for the depth hyper-parameter are 2 and 3, and the best depth setting for xDeepFM is 3, which indicates that the interactions we learned are at most 4-order.
 \subsection{Hyper-Parameter Study (Q3)}
 We study the impact of hyper-parameters on xDeepFM in this section, including (1) the number of hidden layers; (2) the number of neurons per layer; and (3) activation functions. We conduct experiments via holding the best settings for the DNN part while varying the settings for the CIN part.
 \begin{figure*}[htbp]
\centering
\begin{subfigure}{.32\textwidth}
  \centering
  \includegraphics[width=0.85\textwidth,height=.5\textwidth]{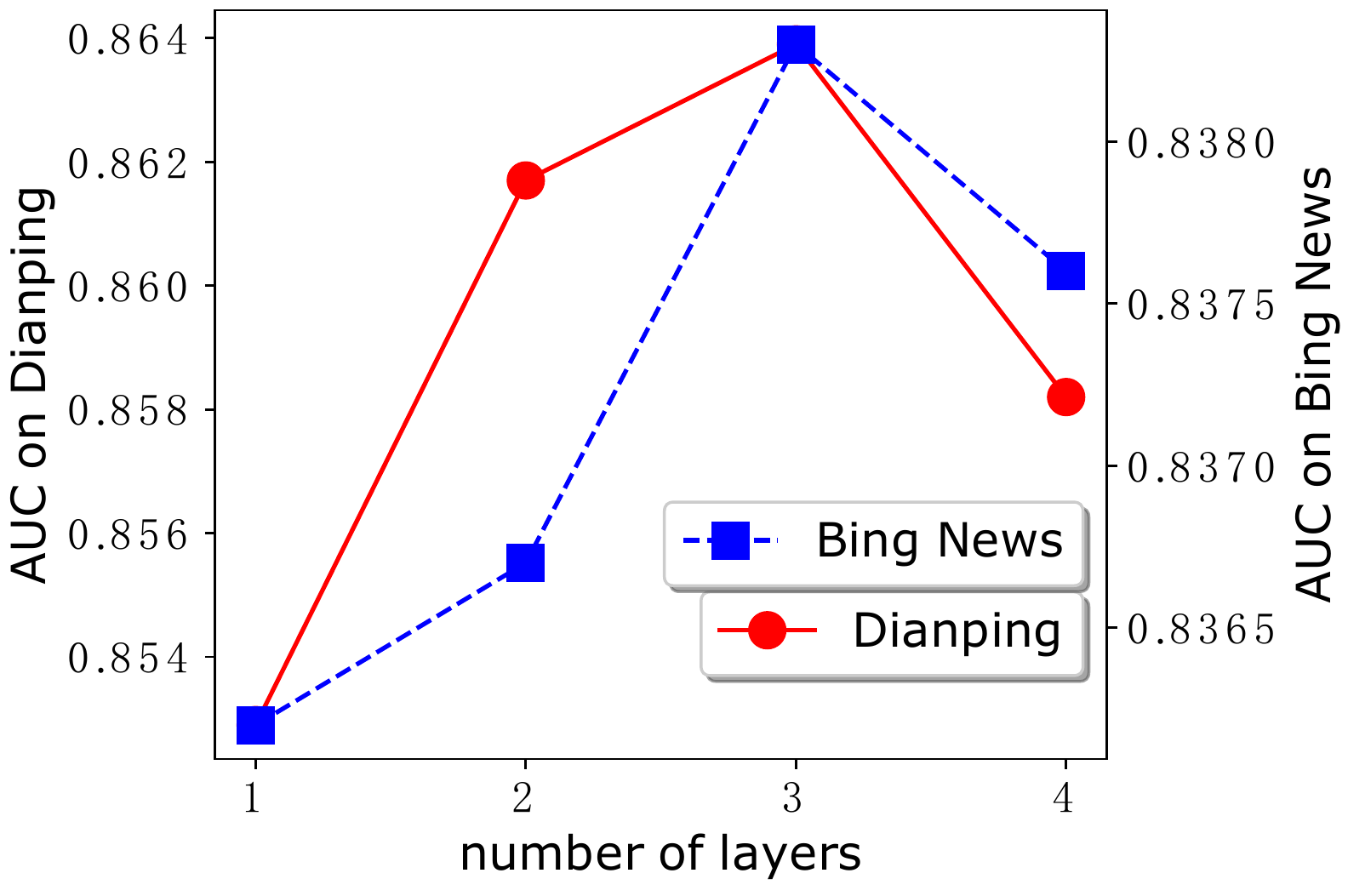}
  \caption{Number of layers.}
  \label{fig:auc_depth}
\end{subfigure} \hfill  
\begin{subfigure}{.32\textwidth}
  \centering
  \includegraphics[width=0.85\textwidth,height=.5\textwidth]{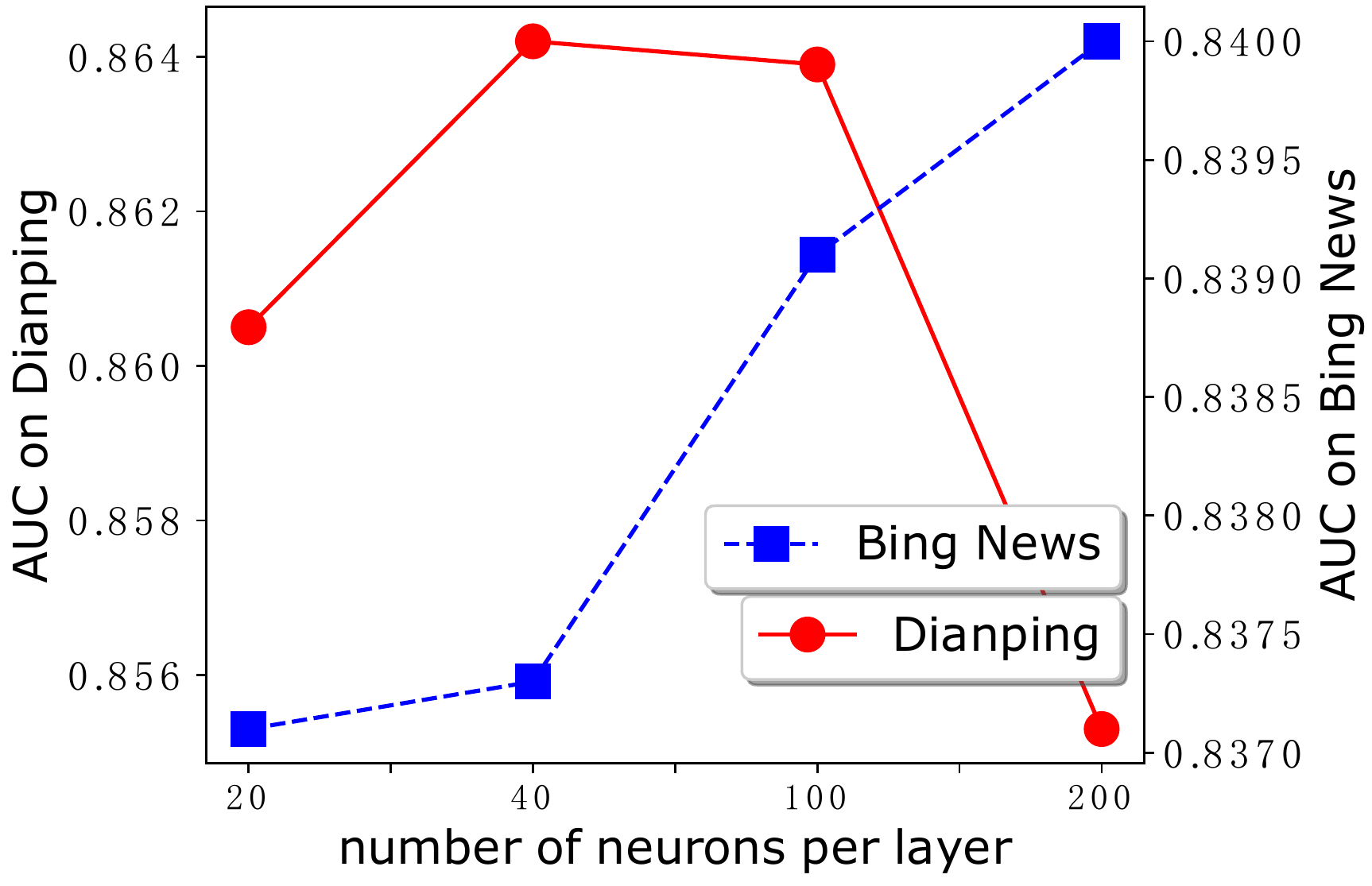}
  \caption{Number of neurons per layer.}
  \label{fig:auc_size}
\end{subfigure}    \hfill  
\begin{subfigure}{.32\textwidth}
  \centering
  \includegraphics[width=0.85\textwidth,height=.5\textwidth]{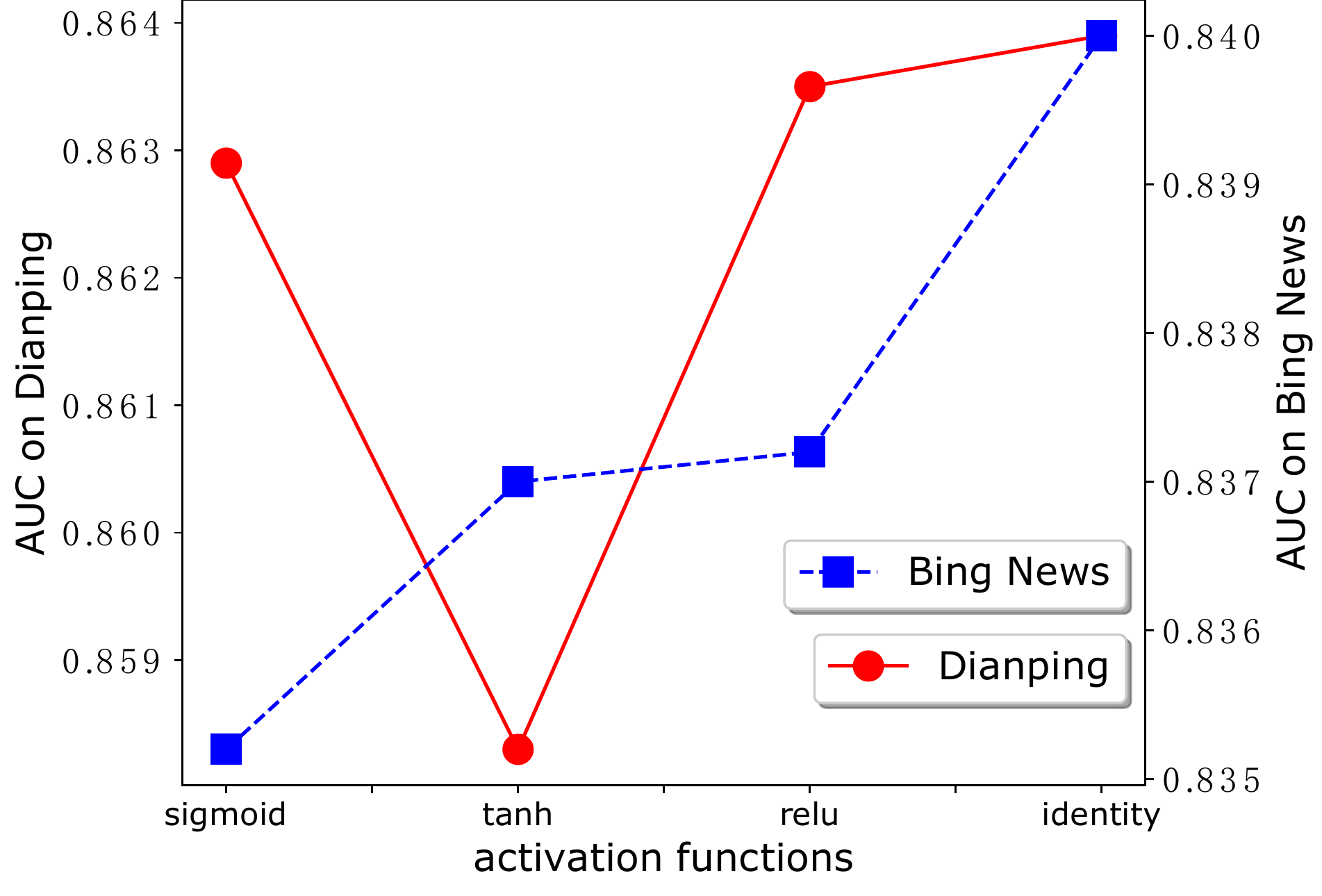}
  \caption{Activation functions}
  \label{fig:auc_activation}
\end{subfigure} 
\caption{Impact of network hyper-parameters on AUC performance.}
\label{fig:auc_hyperparameter}
\end{figure*}
 \begin{figure*}[htbp]
\centering
\begin{subfigure}{.32\textwidth}
  \centering
  \includegraphics[width=0.85\textwidth,height=.5\textwidth]{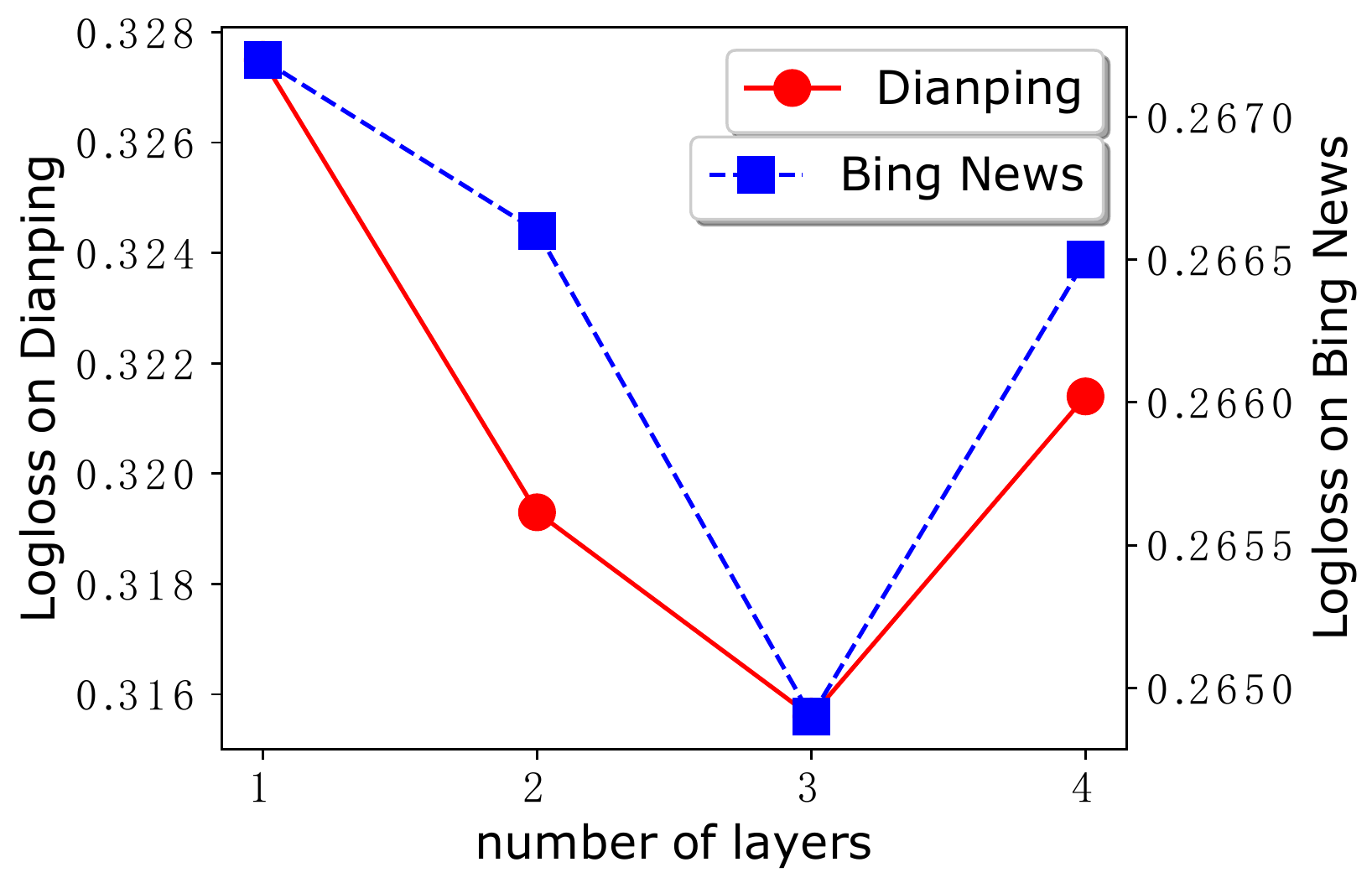}
  \caption{Number of layers.}
  \label{fig:logloss_depth}
\end{subfigure} \hfill  
\begin{subfigure}{.32\textwidth}
  \centering
  \includegraphics[width=0.85\textwidth,height=.5\textwidth]{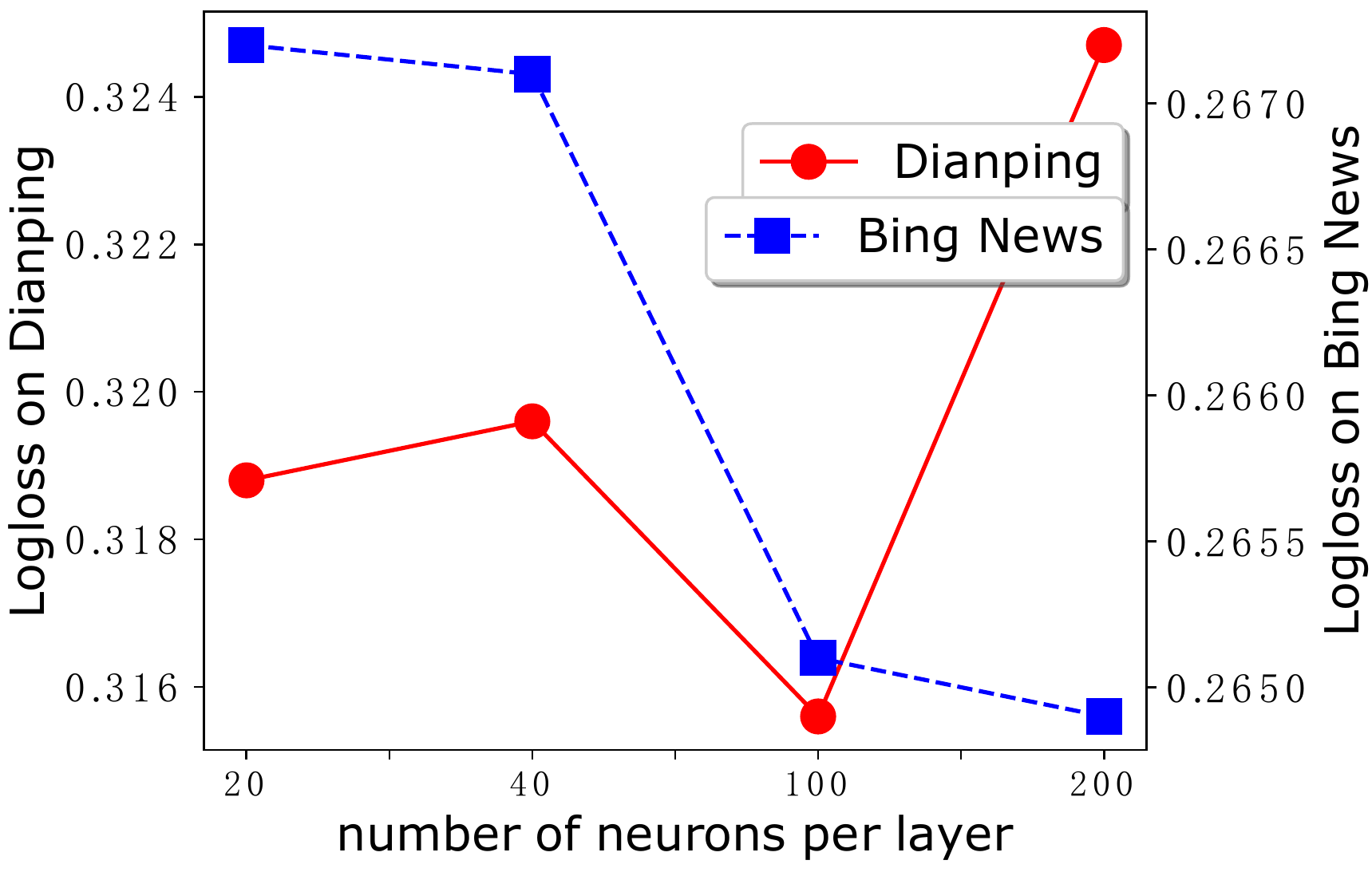}
  \caption{Number of neurons per layer.}
  \label{fig:logloss_size}
\end{subfigure}    \hfill  
\begin{subfigure}{.32\textwidth}
  \centering
  \includegraphics[width=0.85\textwidth,height=.5\textwidth]{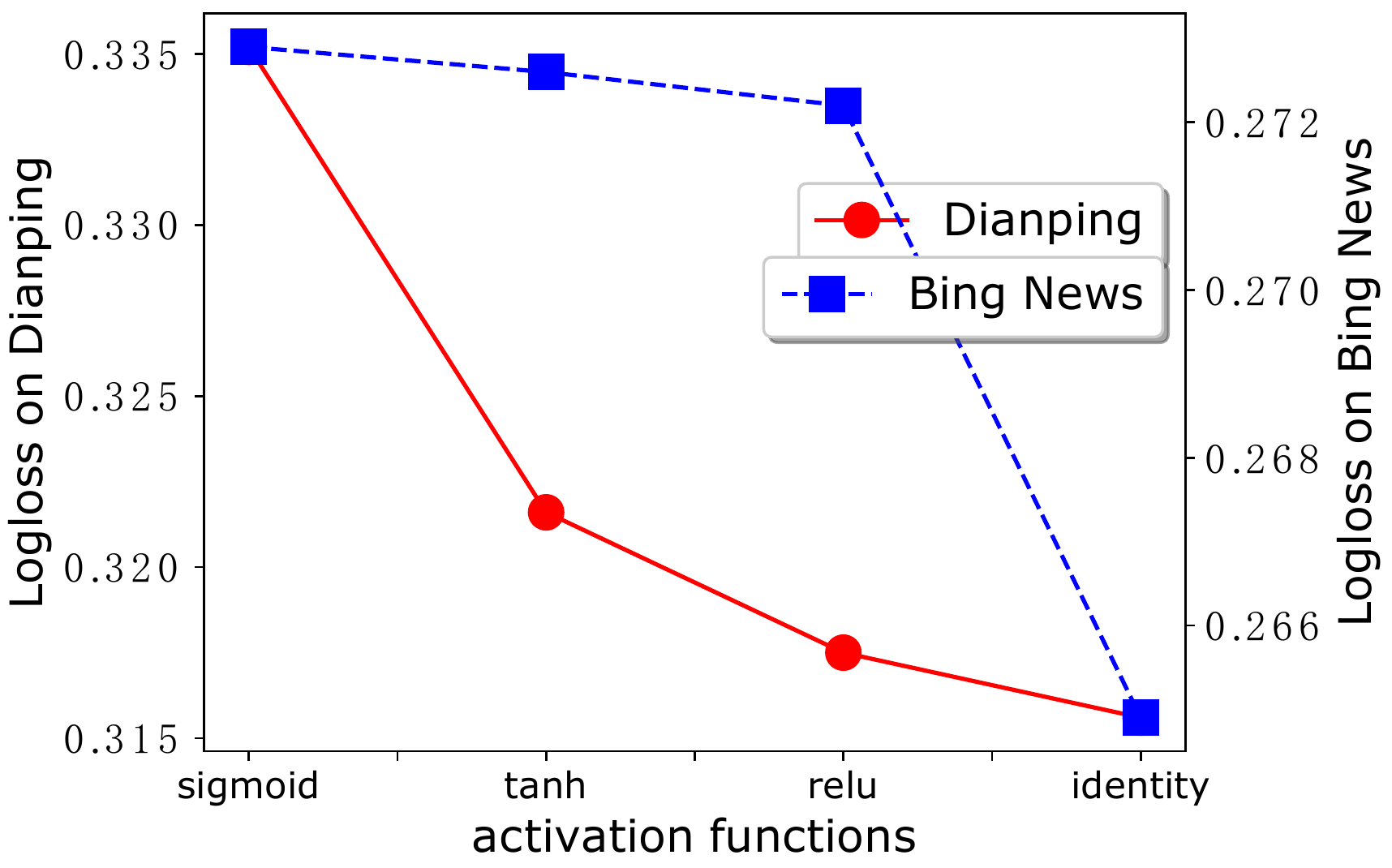}
  \caption{Activation functions}
  \label{fig:logloss_activation}
\end{subfigure} 
\caption{Impact of network hyper-parameters on Logloss performance.}
\label{fig:logloss_hyperparameter}
\end{figure*}  
 \\\indent\textbf{Depth of Network}. Figure \ref{fig:auc_depth} and \ref{fig:logloss_depth} demonstrate the impact of number of hidden layers. We can observe that the performance of xDeepFM increases with the depth of network at the beginning. However, model performance degrades when the depth of network is set greater than 3. It is caused by overfitting evidenced by that we notice that the loss of training data still keeps decreasing when we add more hidden layers.
 \\\indent\textbf{Number of Neurons per Layer}. Adding the number of neurons per layer indicates increasing the number of feature maps in CIN. As shown in Figure \ref{fig:auc_size} and \ref{fig:logloss_size}, model performance on Bing News dataset increases steadily when we increase the number of neurons from $20$ to $200$, while on Dianping dataset, $100$ is a more suitable setting for the number of neurons per layer. In this experiment we fix the depth of network at 3.
 \\\indent\textbf{Activation Function}. Note that we exploit the identity as activation function on neurons of CIN, as shown in Eq. \ref{eq:xk}. A common practice in deep learning literature is to employ non-linear activation functions on hidden neurons. We thus compare the results of different activation functions on CIN (for neurons in DNN, we keep the activation function with \textsl{relu}). As shown in Figure \ref{fig:auc_activation} and \ref{fig:logloss_activation}, identify function is indeed the most suitable one for neurons in CIN.
\section{related work}\label{relatedwork}
\subsection{Classical Recommender Systems}
\subsubsection{Non-factorization Models}
For web-scale recommender systems (RSs), the input features are usually sparse, categorical-continuous-mixed, and high-dimensional. Linear models, such as logistic regression with \textsl{FTRL} \cite{mcmahan2013ad}, are widely adopted as they are easy to manage, maintain, and deploy. Because linear models lack the ability of learning feature interactions, data scientists have to spend a lot of work on engineering cross features in order to achieve better performance \cite{richardson2007predicting,Lian:2017:PLJ:3124791.3124794}. Considering that some hidden features are hard to design manually, some researchers exploit boosting decision trees to help build feature transformations \cite{he2014practical,ling2017model}.
\subsubsection{Factorization Models}
A major downside of the aforementioned models is that they can not generalize to unseen feature interactions in the training set. Factorization Machines \cite{rendle2010factorization} overcome this problem via embedding each feature into a low dimension latent vector. Matrix factorization (MF) \cite{koren2009matrix}, which only considers IDs as features, can be regarded as a special kind of FM. Recommendations are made via the product of two latent vectors, thus it does not require the co-occurrence of user and item in the training set.  MF is the most popular model-based collaborative filtering method in the RS literature \cite{srebro2005maximum,koren2008factorization,lee2013local,pan2008one}. \cite{chen2012svdfeature,menon2010log} extend MF to leveraging side information, in which both a linear model and a MF model are included. On the other hand, for many recommender systems, only implicit feedback datasets such as users' watching history and browsing activities are available. Thus researchers extend the factorization models to a Bayesian Personalized Ranking (BPR) framework \cite{rendle2009bpr,rendle2010pairwise,he2016vbpr,yuan2016lambdafm} for implicit feedback.
\subsection{Recommender Systems with Deep Learning}
Deep learning techniques have achieved great success in computer vision \cite{krizhevsky2012imagenet,he2016deep}, speech recognition \cite{hinton2012deep,amodei2016deep} and natural language understanding \cite{mikolov2010recurrent,cho2014learning}. As a result, an increasing number of researchers are interested in employing DNNs for recommender systems.
\subsubsection{Deep Learning for High-Order Interactions} To avoid manually building up high-order cross features, researchers apply DNNs on field embedding, thus patterns from categorical feature interactions can be learned automatically. Representative models include FNN \cite{zhang2016deep}, PNN \cite{qu2016product}, DeepCross \cite{shan2016deep}, NFM \cite{he2017neural}, DCN \cite{wang2017deep}, Wide\&Deep \cite{cheng2016wide}, and DeepFM \cite{guo2017deepfm}.  These models are highly related to our proposed xDeepFM. Since we have reviewed them in Section \ref{introduction} and Section \ref{preliminaries}, we do not further discuss them in detail in this section. We have demonstrated that our proposed xDeepFM has two special properties in comparison with these models: (1) xDeepFM learns high-order feature interactions in both explicit and implicit fashions; (2) xDeepFM learns feature interactions at the vector-wise level rather than at the bit-wise level.
\subsubsection{Deep Learning for Elaborate Representation Learning} We include some other deep learning-based RSs in this section due to that they are less focused on learning feature interactions. Some early work employs deep learning mainly to model auxiliary information, such as visual data \cite{he2016vbpr} and audio data \cite{wang2014improving}. Recently, deep neural networks are used to model the collaborative filtering (CF) in RSs. \cite{he2017neuralwww} proposes a Neural Collaborative Filtering (NCF) so that the inner product in MF can be replaced with an arbitrary function via a neural architecture. \cite{sedhain2015autorec,wu2016collaborative} model CF base on the autoencoder paradigm, and they have empirically demonstrated that autoencoder-based CF outperforms several classical MF models. Autoencoders can be further employed for jointly modeling CF and side information with the purpose of generating better latent factors \cite{dong2017hybrid,wang2015collaborative,zhang2016collaborative}. \cite{elkahky2015multi,lian2017cccfnet} employ neural networks to jointly train multiple domains' latent factors. \cite{chen2017attentive} proposes the Attentive Collaborative Filtering (ACF) to learn more elaborate preference at both item-level and component-level. \cite{zhou2017deep} shows tha traditional RSs can not capture \textsl{interest diversity} and \textsl{local activation} effectively, so they introduce a Deep Interest Network (DIN) to represent users' diverse interests with an attentive activation mechanism.
\section{Conclusions}\label{conclusions}
In this paper, we propose a novel network named Compressed Interaction Network (CIN), which aims to learn high-order feature interactions explicitly. CIN has two special virtues: (1) it can learn certain bounded-degree feature interactions effectively; (2) it learns feature interactions at a vector-wise level. Following the spirit of several popular models, we incorporate a CIN and a DNN in an end-to-end framework, and named the resulting model eXtreme Deep Factorization Machine (xDeepFM). Thus xDeepFM can automatically learn high-order feature interactions in both explicit and implicit fashions, which is of great significance to reducing manual feature engineering work. We conduct comprehensive experiments and the results demonstrate that our xDeepFM outperforms state-of-the-art models consistently on three real-world datasets.\\
\indent There are two directions for future work. First, currently we simply employ a sum pooling for embedding multivalent fields. We can explore the usage of the DIN mechanism \cite{zhou2017deep} to capture the related activation according to the candidate item. Second, as discussed in section \ref{sec:time_complexity}, the time complexity of the CIN module is high. We are interested in developing a distributed version of xDeepFM which can be trained efficiently on a GPU cluster.
\section*{Acknowledgements}
The authors would like to thank the anonymous reviewers for their insightful reviews, which are very helpful on the revision of this paper. This work is supported in part by Youth Innovation Promotion Association of CAS. 

\bibliographystyle{ACM-Reference-Format}
\bibliography{reference}


\begin{thebibliography}{47}


\ifx \showCODEN    \undefined \def \showCODEN     #1{\unskip}     \fi
\ifx \showDOI      \undefined \def \showDOI       #1{#1}\fi
\ifx \showISBNx    \undefined \def \showISBNx     #1{\unskip}     \fi
\ifx \showISBNxiii \undefined \def \showISBNxiii  #1{\unskip}     \fi
\ifx \showISSN     \undefined \def \showISSN      #1{\unskip}     \fi
\ifx \showLCCN     \undefined \def \showLCCN      #1{\unskip}     \fi
\ifx \shownote     \undefined \def \shownote      #1{#1}          \fi
\ifx \showarticletitle \undefined \def \showarticletitle #1{#1}   \fi
\ifx \showURL      \undefined \def \showURL       {\relax}        \fi
\providecommand\bibfield[2]{#2}
\providecommand\bibinfo[2]{#2}
\providecommand\natexlab[1]{#1}
\providecommand\showeprint[2][]{arXiv:#2}

\bibitem[\protect\citeauthoryear{Amodei, Ananthanarayanan, Anubhai, Bai,
  Battenberg, Case, Casper, Catanzaro, Cheng, Chen, et~al\mbox{.}}{Amodei
  et~al\mbox{.}}{2016}]%
        {amodei2016deep}
\bibfield{author}{\bibinfo{person}{Dario Amodei}, \bibinfo{person}{Sundaram
  Ananthanarayanan}, \bibinfo{person}{Rishita Anubhai},
  \bibinfo{person}{Jingliang Bai}, \bibinfo{person}{Eric Battenberg},
  \bibinfo{person}{Carl Case}, \bibinfo{person}{Jared Casper},
  \bibinfo{person}{Bryan Catanzaro}, \bibinfo{person}{Qiang Cheng},
  \bibinfo{person}{Guoliang Chen}, {et~al\mbox{.}}}
  \bibinfo{year}{2016}\natexlab{}.
\newblock \showarticletitle{Deep speech 2: End-to-end speech recognition in
  english and mandarin}. In \bibinfo{booktitle}{\emph{International Conference
  on Machine Learning}}. \bibinfo{pages}{173--182}.
\newblock


\bibitem[\protect\citeauthoryear{Blondel, Fujino, Ueda, and Ishihata}{Blondel
  et~al\mbox{.}}{2016}]%
        {blondel2016higher}
\bibfield{author}{\bibinfo{person}{Mathieu Blondel}, \bibinfo{person}{Akinori
  Fujino}, \bibinfo{person}{Naonori Ueda}, {and} \bibinfo{person}{Masakazu
  Ishihata}.} \bibinfo{year}{2016}\natexlab{}.
\newblock \showarticletitle{Higher-order factorization machines}. In
  \bibinfo{booktitle}{\emph{Advances in Neural Information Processing
  Systems}}. \bibinfo{pages}{3351--3359}.
\newblock


\bibitem[\protect\citeauthoryear{Chen, Zhang, He, Nie, Liu, and Chua}{Chen
  et~al\mbox{.}}{2017}]%
        {chen2017attentive}
\bibfield{author}{\bibinfo{person}{Jingyuan Chen}, \bibinfo{person}{Hanwang
  Zhang}, \bibinfo{person}{Xiangnan He}, \bibinfo{person}{Liqiang Nie},
  \bibinfo{person}{Wei Liu}, {and} \bibinfo{person}{Tat-Seng Chua}.}
  \bibinfo{year}{2017}\natexlab{}.
\newblock \showarticletitle{Attentive collaborative filtering: Multimedia
  recommendation with item-and component-level attention}. In
  \bibinfo{booktitle}{\emph{Proceedings of the 40th International ACM SIGIR
  conference on Research and Development in Information Retrieval}}. ACM,
  \bibinfo{pages}{335--344}.
\newblock


\bibitem[\protect\citeauthoryear{Chen, Zhang, Lu, Chen, Zheng, and Yu}{Chen
  et~al\mbox{.}}{2012}]%
        {chen2012svdfeature}
\bibfield{author}{\bibinfo{person}{Tianqi Chen}, \bibinfo{person}{Weinan
  Zhang}, \bibinfo{person}{Qiuxia Lu}, \bibinfo{person}{Kailong Chen},
  \bibinfo{person}{Zhao Zheng}, {and} \bibinfo{person}{Yong Yu}.}
  \bibinfo{year}{2012}\natexlab{}.
\newblock \showarticletitle{SVDFeature: a toolkit for feature-based
  collaborative filtering}.
\newblock \bibinfo{journal}{\emph{Journal of Machine Learning Research}}
  \bibinfo{volume}{13}, \bibinfo{number}{Dec} (\bibinfo{year}{2012}),
  \bibinfo{pages}{3619--3622}.
\newblock


\bibitem[\protect\citeauthoryear{Cheng, Koc, Harmsen, Shaked, Chandra, Aradhye,
  Anderson, Corrado, Chai, Ispir, et~al\mbox{.}}{Cheng et~al\mbox{.}}{2016}]%
        {cheng2016wide}
\bibfield{author}{\bibinfo{person}{Heng-Tze Cheng}, \bibinfo{person}{Levent
  Koc}, \bibinfo{person}{Jeremiah Harmsen}, \bibinfo{person}{Tal Shaked},
  \bibinfo{person}{Tushar Chandra}, \bibinfo{person}{Hrishi Aradhye},
  \bibinfo{person}{Glen Anderson}, \bibinfo{person}{Greg Corrado},
  \bibinfo{person}{Wei Chai}, \bibinfo{person}{Mustafa Ispir}, {et~al\mbox{.}}}
  \bibinfo{year}{2016}\natexlab{}.
\newblock \showarticletitle{Wide \& deep learning for recommender systems}. In
  \bibinfo{booktitle}{\emph{Proceedings of the 1st Workshop on Deep Learning
  for Recommender Systems}}. ACM, \bibinfo{pages}{7--10}.
\newblock


\bibitem[\protect\citeauthoryear{Cho, Van~Merri{\"e}nboer, Gulcehre, Bahdanau,
  Bougares, Schwenk, and Bengio}{Cho et~al\mbox{.}}{2014}]%
        {cho2014learning}
\bibfield{author}{\bibinfo{person}{Kyunghyun Cho}, \bibinfo{person}{Bart
  Van~Merri{\"e}nboer}, \bibinfo{person}{Caglar Gulcehre},
  \bibinfo{person}{Dzmitry Bahdanau}, \bibinfo{person}{Fethi Bougares},
  \bibinfo{person}{Holger Schwenk}, {and} \bibinfo{person}{Yoshua Bengio}.}
  \bibinfo{year}{2014}\natexlab{}.
\newblock \showarticletitle{Learning phrase representations using RNN
  encoder-decoder for statistical machine translation}.
\newblock \bibinfo{journal}{\emph{arXiv preprint arXiv:1406.1078}}
  (\bibinfo{year}{2014}).
\newblock


\bibitem[\protect\citeauthoryear{Dong, Yu, Wu, Sun, Yuan, and Zhang}{Dong
  et~al\mbox{.}}{2017}]%
        {dong2017hybrid}
\bibfield{author}{\bibinfo{person}{Xin Dong}, \bibinfo{person}{Lei Yu},
  \bibinfo{person}{Zhonghuo Wu}, \bibinfo{person}{Yuxia Sun},
  \bibinfo{person}{Lingfeng Yuan}, {and} \bibinfo{person}{Fangxi Zhang}.}
  \bibinfo{year}{2017}\natexlab{}.
\newblock \showarticletitle{A Hybrid Collaborative Filtering Model with Deep
  Structure for Recommender Systems}. In \bibinfo{booktitle}{\emph{AAAI}}.
  \bibinfo{pages}{1309--1315}.
\newblock


\bibitem[\protect\citeauthoryear{Elkahky, Song, and He}{Elkahky
  et~al\mbox{.}}{2015}]%
        {elkahky2015multi}
\bibfield{author}{\bibinfo{person}{Ali~Mamdouh Elkahky}, \bibinfo{person}{Yang
  Song}, {and} \bibinfo{person}{Xiaodong He}.} \bibinfo{year}{2015}\natexlab{}.
\newblock \showarticletitle{A multi-view deep learning approach for cross
  domain user modeling in recommendation systems}. In
  \bibinfo{booktitle}{\emph{Proceedings of the 24th International Conference on
  World Wide Web}}. International World Wide Web Conferences Steering
  Committee, \bibinfo{pages}{278--288}.
\newblock


\bibitem[\protect\citeauthoryear{Guo, Tang, Ye, Li, and He}{Guo
  et~al\mbox{.}}{2017}]%
        {guo2017deepfm}
\bibfield{author}{\bibinfo{person}{Huifeng Guo}, \bibinfo{person}{Ruiming
  Tang}, \bibinfo{person}{Yunming Ye}, \bibinfo{person}{Zhenguo Li}, {and}
  \bibinfo{person}{Xiuqiang He}.} \bibinfo{year}{2017}\natexlab{}.
\newblock \showarticletitle{Deepfm: A factorization-machine based neural
  network for CTR prediction}.
\newblock \bibinfo{journal}{\emph{arXiv preprint arXiv:1703.04247}}
  (\bibinfo{year}{2017}).
\newblock


\bibitem[\protect\citeauthoryear{He, Zhang, Ren, and Sun}{He
  et~al\mbox{.}}{2016}]%
        {he2016deep}
\bibfield{author}{\bibinfo{person}{Kaiming He}, \bibinfo{person}{Xiangyu
  Zhang}, \bibinfo{person}{Shaoqing Ren}, {and} \bibinfo{person}{Jian Sun}.}
  \bibinfo{year}{2016}\natexlab{}.
\newblock \showarticletitle{Deep residual learning for image recognition}. In
  \bibinfo{booktitle}{\emph{Proceedings of the IEEE conference on computer
  vision and pattern recognition}}. \bibinfo{pages}{770--778}.
\newblock


\bibitem[\protect\citeauthoryear{He and McAuley}{He and McAuley}{2016}]%
        {he2016vbpr}
\bibfield{author}{\bibinfo{person}{Ruining He} {and} \bibinfo{person}{Julian
  McAuley}.} \bibinfo{year}{2016}\natexlab{}.
\newblock \showarticletitle{VBPR: Visual Bayesian Personalized Ranking from
  Implicit Feedback}. In \bibinfo{booktitle}{\emph{AAAI}}.
  \bibinfo{pages}{144--150}.
\newblock


\bibitem[\protect\citeauthoryear{He and Chua}{He and Chua}{2017}]%
        {he2017neural}
\bibfield{author}{\bibinfo{person}{Xiangnan He} {and} \bibinfo{person}{Tat-Seng
  Chua}.} \bibinfo{year}{2017}\natexlab{}.
\newblock \showarticletitle{Neural factorization machines for sparse predictive
  analytics}. In \bibinfo{booktitle}{\emph{Proceedings of the 40th
  International ACM SIGIR conference on Research and Development in Information
  Retrieval}}. ACM, \bibinfo{pages}{355--364}.
\newblock


\bibitem[\protect\citeauthoryear{He, Liao, Zhang, Nie, Hu, and Chua}{He
  et~al\mbox{.}}{2017}]%
        {he2017neuralwww}
\bibfield{author}{\bibinfo{person}{Xiangnan He}, \bibinfo{person}{Lizi Liao},
  \bibinfo{person}{Hanwang Zhang}, \bibinfo{person}{Liqiang Nie},
  \bibinfo{person}{Xia Hu}, {and} \bibinfo{person}{Tat-Seng Chua}.}
  \bibinfo{year}{2017}\natexlab{}.
\newblock \showarticletitle{Neural collaborative filtering}. In
  \bibinfo{booktitle}{\emph{Proceedings of the 26th International Conference on
  World Wide Web}}. International World Wide Web Conferences Steering
  Committee, \bibinfo{pages}{173--182}.
\newblock


\bibitem[\protect\citeauthoryear{He, Pan, Jin, Xu, Liu, Xu, Shi, Atallah,
  Herbrich, Bowers, et~al\mbox{.}}{He et~al\mbox{.}}{2014}]%
        {he2014practical}
\bibfield{author}{\bibinfo{person}{Xinran He}, \bibinfo{person}{Junfeng Pan},
  \bibinfo{person}{Ou Jin}, \bibinfo{person}{Tianbing Xu}, \bibinfo{person}{Bo
  Liu}, \bibinfo{person}{Tao Xu}, \bibinfo{person}{Yanxin Shi},
  \bibinfo{person}{Antoine Atallah}, \bibinfo{person}{Ralf Herbrich},
  \bibinfo{person}{Stuart Bowers}, {et~al\mbox{.}}}
  \bibinfo{year}{2014}\natexlab{}.
\newblock \showarticletitle{Practical lessons from predicting clicks on ads at
  facebook}. In \bibinfo{booktitle}{\emph{Proceedings of the Eighth
  International Workshop on Data Mining for Online Advertising}}. ACM,
  \bibinfo{pages}{1--9}.
\newblock


\bibitem[\protect\citeauthoryear{Hinton, Deng, Yu, Dahl, Mohamed, Jaitly,
  Senior, Vanhoucke, Nguyen, Sainath, et~al\mbox{.}}{Hinton
  et~al\mbox{.}}{2012}]%
        {hinton2012deep}
\bibfield{author}{\bibinfo{person}{Geoffrey Hinton}, \bibinfo{person}{Li Deng},
  \bibinfo{person}{Dong Yu}, \bibinfo{person}{George~E Dahl},
  \bibinfo{person}{Abdel-rahman Mohamed}, \bibinfo{person}{Navdeep Jaitly},
  \bibinfo{person}{Andrew Senior}, \bibinfo{person}{Vincent Vanhoucke},
  \bibinfo{person}{Patrick Nguyen}, \bibinfo{person}{Tara~N Sainath},
  {et~al\mbox{.}}} \bibinfo{year}{2012}\natexlab{}.
\newblock \showarticletitle{Deep neural networks for acoustic modeling in
  speech recognition: The shared views of four research groups}.
\newblock \bibinfo{journal}{\emph{IEEE Signal Processing Magazine}}
  \bibinfo{volume}{29}, \bibinfo{number}{6} (\bibinfo{year}{2012}),
  \bibinfo{pages}{82--97}.
\newblock


\bibitem[\protect\citeauthoryear{Kingma and Ba}{Kingma and Ba}{2014}]%
        {kingma2014adam}
\bibfield{author}{\bibinfo{person}{Diederik~P Kingma} {and}
  \bibinfo{person}{Jimmy Ba}.} \bibinfo{year}{2014}\natexlab{}.
\newblock \showarticletitle{Adam: A method for stochastic optimization}.
\newblock \bibinfo{journal}{\emph{arXiv preprint arXiv:1412.6980}}
  (\bibinfo{year}{2014}).
\newblock


\bibitem[\protect\citeauthoryear{Koren}{Koren}{2008}]%
        {koren2008factorization}
\bibfield{author}{\bibinfo{person}{Yehuda Koren}.}
  \bibinfo{year}{2008}\natexlab{}.
\newblock \showarticletitle{Factorization meets the neighborhood: a
  multifaceted collaborative filtering model}. In
  \bibinfo{booktitle}{\emph{Proceedings of the 14th ACM SIGKDD international
  conference on Knowledge discovery and data mining}}. ACM,
  \bibinfo{pages}{426--434}.
\newblock


\bibitem[\protect\citeauthoryear{Koren, Bell, and Volinsky}{Koren
  et~al\mbox{.}}{2009}]%
        {koren2009matrix}
\bibfield{author}{\bibinfo{person}{Yehuda Koren}, \bibinfo{person}{Robert
  Bell}, {and} \bibinfo{person}{Chris Volinsky}.}
  \bibinfo{year}{2009}\natexlab{}.
\newblock \showarticletitle{Matrix factorization techniques for recommender
  systems}.
\newblock \bibinfo{journal}{\emph{Computer}} \bibinfo{volume}{42},
  \bibinfo{number}{8} (\bibinfo{year}{2009}).
\newblock


\bibitem[\protect\citeauthoryear{Krizhevsky, Sutskever, and Hinton}{Krizhevsky
  et~al\mbox{.}}{2012}]%
        {krizhevsky2012imagenet}
\bibfield{author}{\bibinfo{person}{Alex Krizhevsky}, \bibinfo{person}{Ilya
  Sutskever}, {and} \bibinfo{person}{Geoffrey~E Hinton}.}
  \bibinfo{year}{2012}\natexlab{}.
\newblock \showarticletitle{Imagenet classification with deep convolutional
  neural networks}. In \bibinfo{booktitle}{\emph{Advances in neural information
  processing systems}}. \bibinfo{pages}{1097--1105}.
\newblock


\bibitem[\protect\citeauthoryear{Lee, Kim, Lebanon, and Singer}{Lee
  et~al\mbox{.}}{2013}]%
        {lee2013local}
\bibfield{author}{\bibinfo{person}{Joonseok Lee}, \bibinfo{person}{Seungyeon
  Kim}, \bibinfo{person}{Guy Lebanon}, {and} \bibinfo{person}{Yoram Singer}.}
  \bibinfo{year}{2013}\natexlab{}.
\newblock \showarticletitle{Local low-rank matrix approximation}. In
  \bibinfo{booktitle}{\emph{International Conference on Machine Learning}}.
  \bibinfo{pages}{82--90}.
\newblock


\bibitem[\protect\citeauthoryear{Lian and Xie}{Lian and Xie}{2016}]%
        {lian2016cross}
\bibfield{author}{\bibinfo{person}{Jianxun Lian} {and} \bibinfo{person}{Xing
  Xie}.} \bibinfo{year}{2016}\natexlab{}.
\newblock \showarticletitle{Cross-Device User Matching Based on Massive Browse
  Logs: The Runner-Up Solution for the 2016 CIKM Cup}.
\newblock \bibinfo{journal}{\emph{arXiv preprint arXiv:1610.03928}}
  (\bibinfo{year}{2016}).
\newblock


\bibitem[\protect\citeauthoryear{Lian, Zhang, Hou, Wang, Xie, and Sun}{Lian
  et~al\mbox{.}}{2017a}]%
        {Lian:2017:PLJ:3124791.3124794}
\bibfield{author}{\bibinfo{person}{Jianxun Lian}, \bibinfo{person}{Fuzheng
  Zhang}, \bibinfo{person}{Min Hou}, \bibinfo{person}{Hongwei Wang},
  \bibinfo{person}{Xing Xie}, {and} \bibinfo{person}{Guangzhong Sun}.}
  \bibinfo{year}{2017}\natexlab{a}.
\newblock \showarticletitle{Practical Lessons for Job Recommendations in the
  Cold-Start Scenario}. In \bibinfo{booktitle}{\emph{Proceedings of the
  Recommender Systems Challenge 2017}} \emph{(\bibinfo{series}{RecSys Challenge
  '17})}. \bibinfo{publisher}{ACM}, \bibinfo{address}{New York, NY, USA},
  Article \bibinfo{articleno}{4}, \bibinfo{numpages}{6}~pages.
\newblock
\showISBNx{978-1-4503-5391-5}
\urldef\tempurl%
\url{https://doi.org/10.1145/3124791.3124794}
\showDOI{\tempurl}


\bibitem[\protect\citeauthoryear{Lian, Zhang, Xie, and Sun}{Lian
  et~al\mbox{.}}{2017b}]%
        {lian2017cccfnet}
\bibfield{author}{\bibinfo{person}{Jianxun Lian}, \bibinfo{person}{Fuzheng
  Zhang}, \bibinfo{person}{Xing Xie}, {and} \bibinfo{person}{Guangzhong Sun}.}
  \bibinfo{year}{2017}\natexlab{b}.
\newblock \showarticletitle{CCCFNet: a content-boosted collaborative filtering
  neural network for cross domain recommender systems}. In
  \bibinfo{booktitle}{\emph{Proceedings of the 26th International Conference on
  World Wide Web Companion}}. International World Wide Web Conferences Steering
  Committee, \bibinfo{pages}{817--818}.
\newblock


\bibitem[\protect\citeauthoryear{Lian, Zhang, Xie, and Sun}{Lian
  et~al\mbox{.}}{2017c}]%
        {lian2017restaurant}
\bibfield{author}{\bibinfo{person}{Jianxun Lian}, \bibinfo{person}{Fuzheng
  Zhang}, \bibinfo{person}{Xing Xie}, {and} \bibinfo{person}{Guangzhong Sun}.}
  \bibinfo{year}{2017}\natexlab{c}.
\newblock \showarticletitle{Restaurant Survival Analysis with Heterogeneous
  Information}. In \bibinfo{booktitle}{\emph{Proceedings of the 26th
  International Conference on World Wide Web Companion}}. International World
  Wide Web Conferences Steering Committee, \bibinfo{pages}{993--1002}.
\newblock


\bibitem[\protect\citeauthoryear{Ling, Deng, Gu, Zhou, Li, and Sun}{Ling
  et~al\mbox{.}}{2017}]%
        {ling2017model}
\bibfield{author}{\bibinfo{person}{Xiaoliang Ling}, \bibinfo{person}{Weiwei
  Deng}, \bibinfo{person}{Chen Gu}, \bibinfo{person}{Hucheng Zhou},
  \bibinfo{person}{Cui Li}, {and} \bibinfo{person}{Feng Sun}.}
  \bibinfo{year}{2017}\natexlab{}.
\newblock \showarticletitle{Model Ensemble for Click Prediction in Bing Search
  Ads}. In \bibinfo{booktitle}{\emph{Proceedings of the 26th International
  Conference on World Wide Web Companion}}. International World Wide Web
  Conferences Steering Committee, \bibinfo{pages}{689--698}.
\newblock


\bibitem[\protect\citeauthoryear{Liu, Nguyen, Zhao, Zha, Yang, Cao, Wu, Zhao,
  and Chen}{Liu et~al\mbox{.}}{2016}]%
        {liu2016repeat}
\bibfield{author}{\bibinfo{person}{Guimei Liu}, \bibinfo{person}{Tam~T Nguyen},
  \bibinfo{person}{Gang Zhao}, \bibinfo{person}{Wei Zha},
  \bibinfo{person}{Jianbo Yang}, \bibinfo{person}{Jianneng Cao},
  \bibinfo{person}{Min Wu}, \bibinfo{person}{Peilin Zhao}, {and}
  \bibinfo{person}{Wei Chen}.} \bibinfo{year}{2016}\natexlab{}.
\newblock \showarticletitle{Repeat buyer prediction for e-commerce}. In
  \bibinfo{booktitle}{\emph{Proceedings of the 22nd ACM SIGKDD International
  Conference on Knowledge Discovery and Data Mining}}. ACM,
  \bibinfo{pages}{155--164}.
\newblock


\bibitem[\protect\citeauthoryear{McMahan, Holt, Sculley, Young, Ebner, Grady,
  Nie, Phillips, Davydov, Golovin, et~al\mbox{.}}{McMahan
  et~al\mbox{.}}{2013}]%
        {mcmahan2013ad}
\bibfield{author}{\bibinfo{person}{H~Brendan McMahan}, \bibinfo{person}{Gary
  Holt}, \bibinfo{person}{David Sculley}, \bibinfo{person}{Michael Young},
  \bibinfo{person}{Dietmar Ebner}, \bibinfo{person}{Julian Grady},
  \bibinfo{person}{Lan Nie}, \bibinfo{person}{Todd Phillips},
  \bibinfo{person}{Eugene Davydov}, \bibinfo{person}{Daniel Golovin},
  {et~al\mbox{.}}} \bibinfo{year}{2013}\natexlab{}.
\newblock \showarticletitle{Ad click prediction: a view from the trenches}. In
  \bibinfo{booktitle}{\emph{Proceedings of the 19th ACM SIGKDD international
  conference on Knowledge discovery and data mining}}. ACM,
  \bibinfo{pages}{1222--1230}.
\newblock


\bibitem[\protect\citeauthoryear{Menon and Elkan}{Menon and Elkan}{2010}]%
        {menon2010log}
\bibfield{author}{\bibinfo{person}{Aditya~Krishna Menon} {and}
  \bibinfo{person}{Charles Elkan}.} \bibinfo{year}{2010}\natexlab{}.
\newblock \showarticletitle{A log-linear model with latent features for dyadic
  prediction}. In \bibinfo{booktitle}{\emph{Data Mining (ICDM), 2010 IEEE 10th
  International Conference on}}. IEEE, \bibinfo{pages}{364--373}.
\newblock


\bibitem[\protect\citeauthoryear{Mikolov, Karafi{\'a}t, Burget,
  {\v{C}}ernock{\`y}, and Khudanpur}{Mikolov et~al\mbox{.}}{2010}]%
        {mikolov2010recurrent}
\bibfield{author}{\bibinfo{person}{Tom{\'a}{\v{s}} Mikolov},
  \bibinfo{person}{Martin Karafi{\'a}t}, \bibinfo{person}{Luk{\'a}{\v{s}}
  Burget}, \bibinfo{person}{Jan {\v{C}}ernock{\`y}}, {and}
  \bibinfo{person}{Sanjeev Khudanpur}.} \bibinfo{year}{2010}\natexlab{}.
\newblock \showarticletitle{Recurrent neural network based language model}. In
  \bibinfo{booktitle}{\emph{Eleventh Annual Conference of the International
  Speech Communication Association}}.
\newblock


\bibitem[\protect\citeauthoryear{Pan, Zhou, Cao, Liu, Lukose, Scholz, and
  Yang}{Pan et~al\mbox{.}}{2008}]%
        {pan2008one}
\bibfield{author}{\bibinfo{person}{Rong Pan}, \bibinfo{person}{Yunhong Zhou},
  \bibinfo{person}{Bin Cao}, \bibinfo{person}{Nathan~N Liu},
  \bibinfo{person}{Rajan Lukose}, \bibinfo{person}{Martin Scholz}, {and}
  \bibinfo{person}{Qiang Yang}.} \bibinfo{year}{2008}\natexlab{}.
\newblock \showarticletitle{One-class collaborative filtering}. In
  \bibinfo{booktitle}{\emph{Data Mining, 2008. ICDM'08. Eighth IEEE
  International Conference on}}. IEEE, \bibinfo{pages}{502--511}.
\newblock


\bibitem[\protect\citeauthoryear{Qu, Cai, Ren, Zhang, Yu, Wen, and Wang}{Qu
  et~al\mbox{.}}{2016}]%
        {qu2016product}
\bibfield{author}{\bibinfo{person}{Yanru Qu}, \bibinfo{person}{Han Cai},
  \bibinfo{person}{Kan Ren}, \bibinfo{person}{Weinan Zhang},
  \bibinfo{person}{Yong Yu}, \bibinfo{person}{Ying Wen}, {and}
  \bibinfo{person}{Jun Wang}.} \bibinfo{year}{2016}\natexlab{}.
\newblock \showarticletitle{Product-based neural networks for user response
  prediction}. In \bibinfo{booktitle}{\emph{Data Mining (ICDM), 2016 IEEE 16th
  International Conference on}}. IEEE, \bibinfo{pages}{1149--1154}.
\newblock


\bibitem[\protect\citeauthoryear{Rendle}{Rendle}{2010}]%
        {rendle2010factorization}
\bibfield{author}{\bibinfo{person}{Steffen Rendle}.}
  \bibinfo{year}{2010}\natexlab{}.
\newblock \showarticletitle{Factorization machines}. In
  \bibinfo{booktitle}{\emph{Data Mining (ICDM), 2010 IEEE 10th International
  Conference on}}. IEEE, \bibinfo{pages}{995--1000}.
\newblock


\bibitem[\protect\citeauthoryear{Rendle, Freudenthaler, Gantner, and
  Schmidt-Thieme}{Rendle et~al\mbox{.}}{2009}]%
        {rendle2009bpr}
\bibfield{author}{\bibinfo{person}{Steffen Rendle}, \bibinfo{person}{Christoph
  Freudenthaler}, \bibinfo{person}{Zeno Gantner}, {and} \bibinfo{person}{Lars
  Schmidt-Thieme}.} \bibinfo{year}{2009}\natexlab{}.
\newblock \showarticletitle{BPR: Bayesian personalized ranking from implicit
  feedback}. In \bibinfo{booktitle}{\emph{Proceedings of the twenty-fifth
  conference on uncertainty in artificial intelligence}}. AUAI Press,
  \bibinfo{pages}{452--461}.
\newblock


\bibitem[\protect\citeauthoryear{Rendle and Schmidt-Thieme}{Rendle and
  Schmidt-Thieme}{2010}]%
        {rendle2010pairwise}
\bibfield{author}{\bibinfo{person}{Steffen Rendle} {and} \bibinfo{person}{Lars
  Schmidt-Thieme}.} \bibinfo{year}{2010}\natexlab{}.
\newblock \showarticletitle{Pairwise interaction tensor factorization for
  personalized tag recommendation}. In \bibinfo{booktitle}{\emph{Proceedings of
  the third ACM international conference on Web search and data mining}}. ACM,
  \bibinfo{pages}{81--90}.
\newblock


\bibitem[\protect\citeauthoryear{Richardson, Dominowska, and Ragno}{Richardson
  et~al\mbox{.}}{2007}]%
        {richardson2007predicting}
\bibfield{author}{\bibinfo{person}{Matthew Richardson}, \bibinfo{person}{Ewa
  Dominowska}, {and} \bibinfo{person}{Robert Ragno}.}
  \bibinfo{year}{2007}\natexlab{}.
\newblock \showarticletitle{Predicting clicks: estimating the click-through
  rate for new ads}. In \bibinfo{booktitle}{\emph{Proceedings of the 16th
  international conference on World Wide Web}}. ACM, \bibinfo{pages}{521--530}.
\newblock


\bibitem[\protect\citeauthoryear{Sedhain, Menon, Sanner, and Xie}{Sedhain
  et~al\mbox{.}}{2015}]%
        {sedhain2015autorec}
\bibfield{author}{\bibinfo{person}{Suvash Sedhain},
  \bibinfo{person}{Aditya~Krishna Menon}, \bibinfo{person}{Scott Sanner}, {and}
  \bibinfo{person}{Lexing Xie}.} \bibinfo{year}{2015}\natexlab{}.
\newblock \showarticletitle{Autorec: Autoencoders meet collaborative
  filtering}. In \bibinfo{booktitle}{\emph{Proceedings of the 24th
  International Conference on World Wide Web}}. ACM, \bibinfo{pages}{111--112}.
\newblock


\bibitem[\protect\citeauthoryear{Shan, Hoens, Jiao, Wang, Yu, and Mao}{Shan
  et~al\mbox{.}}{2016}]%
        {shan2016deep}
\bibfield{author}{\bibinfo{person}{Ying Shan}, \bibinfo{person}{T~Ryan Hoens},
  \bibinfo{person}{Jian Jiao}, \bibinfo{person}{Haijing Wang},
  \bibinfo{person}{Dong Yu}, {and} \bibinfo{person}{JC Mao}.}
  \bibinfo{year}{2016}\natexlab{}.
\newblock \showarticletitle{Deep crossing: Web-scale modeling without manually
  crafted combinatorial features}. In \bibinfo{booktitle}{\emph{Proceedings of
  the 22nd ACM SIGKDD International Conference on Knowledge Discovery and Data
  Mining}}. ACM, \bibinfo{pages}{255--262}.
\newblock


\bibitem[\protect\citeauthoryear{Srebro, Rennie, and Jaakkola}{Srebro
  et~al\mbox{.}}{2005}]%
        {srebro2005maximum}
\bibfield{author}{\bibinfo{person}{Nathan Srebro}, \bibinfo{person}{Jason
  Rennie}, {and} \bibinfo{person}{Tommi~S Jaakkola}.}
  \bibinfo{year}{2005}\natexlab{}.
\newblock \showarticletitle{Maximum-margin matrix factorization}. In
  \bibinfo{booktitle}{\emph{Advances in neural information processing
  systems}}. \bibinfo{pages}{1329--1336}.
\newblock


\bibitem[\protect\citeauthoryear{Wang, Wang, and Yeung}{Wang
  et~al\mbox{.}}{2015}]%
        {wang2015collaborative}
\bibfield{author}{\bibinfo{person}{Hao Wang}, \bibinfo{person}{Naiyan Wang},
  {and} \bibinfo{person}{Dit-Yan Yeung}.} \bibinfo{year}{2015}\natexlab{}.
\newblock \showarticletitle{Collaborative deep learning for recommender
  systems}. In \bibinfo{booktitle}{\emph{Proceedings of the 21th ACM SIGKDD
  International Conference on Knowledge Discovery and Data Mining}}. ACM,
  \bibinfo{pages}{1235--1244}.
\newblock


\bibitem[\protect\citeauthoryear{Wang, Fu, Fu, and Wang}{Wang
  et~al\mbox{.}}{2017}]%
        {wang2017deep}
\bibfield{author}{\bibinfo{person}{Ruoxi Wang}, \bibinfo{person}{Bin Fu},
  \bibinfo{person}{Gang Fu}, {and} \bibinfo{person}{Mingliang Wang}.}
  \bibinfo{year}{2017}\natexlab{}.
\newblock \showarticletitle{Deep \& Cross Network for Ad Click Predictions}.
\newblock \bibinfo{journal}{\emph{arXiv preprint arXiv:1708.05123}}
  (\bibinfo{year}{2017}).
\newblock


\bibitem[\protect\citeauthoryear{Wang and Wang}{Wang and Wang}{2014}]%
        {wang2014improving}
\bibfield{author}{\bibinfo{person}{Xinxi Wang} {and} \bibinfo{person}{Ye
  Wang}.} \bibinfo{year}{2014}\natexlab{}.
\newblock \showarticletitle{Improving content-based and hybrid music
  recommendation using deep learning}. In \bibinfo{booktitle}{\emph{Proceedings
  of the 22nd ACM international conference on Multimedia}}. ACM,
  \bibinfo{pages}{627--636}.
\newblock


\bibitem[\protect\citeauthoryear{Wu, DuBois, Zheng, and Ester}{Wu
  et~al\mbox{.}}{2016}]%
        {wu2016collaborative}
\bibfield{author}{\bibinfo{person}{Yao Wu}, \bibinfo{person}{Christopher
  DuBois}, \bibinfo{person}{Alice~X Zheng}, {and} \bibinfo{person}{Martin
  Ester}.} \bibinfo{year}{2016}\natexlab{}.
\newblock \showarticletitle{Collaborative denoising auto-encoders for top-n
  recommender systems}. In \bibinfo{booktitle}{\emph{Proceedings of the Ninth
  ACM International Conference on Web Search and Data Mining}}. ACM,
  \bibinfo{pages}{153--162}.
\newblock


\bibitem[\protect\citeauthoryear{Xiao, Ye, He, Zhang, Wu, and Chua}{Xiao
  et~al\mbox{.}}{2017}]%
        {DBLP:conf/ijcai/XiaoY0ZWC17}
\bibfield{author}{\bibinfo{person}{Jun Xiao}, \bibinfo{person}{Hao Ye},
  \bibinfo{person}{Xiangnan He}, \bibinfo{person}{Hanwang Zhang},
  \bibinfo{person}{Fei Wu}, {and} \bibinfo{person}{Tat{-}Seng Chua}.}
  \bibinfo{year}{2017}\natexlab{}.
\newblock \showarticletitle{Attentional Factorization Machines: Learning the
  Weight of Feature Interactions via Attention Networks}. In
  \bibinfo{booktitle}{\emph{Proceedings of the Twenty-Sixth International Joint
  Conference on Artificial Intelligence, {IJCAI} 2017, Melbourne, Australia,
  August 19-25, 2017}}. \bibinfo{pages}{3119--3125}.
\newblock
\urldef\tempurl%
\url{https://doi.org/10.24963/ijcai.2017/435}
\showDOI{\tempurl}


\bibitem[\protect\citeauthoryear{Yuan, Guo, Jose, Chen, Yu, and Zhang}{Yuan
  et~al\mbox{.}}{2016}]%
        {yuan2016lambdafm}
\bibfield{author}{\bibinfo{person}{Fajie Yuan}, \bibinfo{person}{Guibing Guo},
  \bibinfo{person}{Joemon~M Jose}, \bibinfo{person}{Long Chen},
  \bibinfo{person}{Haitao Yu}, {and} \bibinfo{person}{Weinan Zhang}.}
  \bibinfo{year}{2016}\natexlab{}.
\newblock \showarticletitle{Lambdafm: learning optimal ranking with
  factorization machines using lambda surrogates}. In
  \bibinfo{booktitle}{\emph{Proceedings of the 25th ACM International on
  Conference on Information and Knowledge Management}}. ACM,
  \bibinfo{pages}{227--236}.
\newblock


\bibitem[\protect\citeauthoryear{Zhang, Yuan, Lian, Xie, and Ma}{Zhang
  et~al\mbox{.}}{2016b}]%
        {zhang2016collaborative}
\bibfield{author}{\bibinfo{person}{Fuzheng Zhang},
  \bibinfo{person}{Nicholas~Jing Yuan}, \bibinfo{person}{Defu Lian},
  \bibinfo{person}{Xing Xie}, {and} \bibinfo{person}{Wei-Ying Ma}.}
  \bibinfo{year}{2016}\natexlab{b}.
\newblock \showarticletitle{Collaborative knowledge base embedding for
  recommender systems}. In \bibinfo{booktitle}{\emph{Proceedings of the 22nd
  ACM SIGKDD international conference on knowledge discovery and data mining}}.
  ACM, \bibinfo{pages}{353--362}.
\newblock


\bibitem[\protect\citeauthoryear{Zhang, Du, and Wang}{Zhang
  et~al\mbox{.}}{2016a}]%
        {zhang2016deep}
\bibfield{author}{\bibinfo{person}{Weinan Zhang}, \bibinfo{person}{Tianming
  Du}, {and} \bibinfo{person}{Jun Wang}.} \bibinfo{year}{2016}\natexlab{a}.
\newblock \showarticletitle{Deep learning over multi-field categorical data}.
  In \bibinfo{booktitle}{\emph{European conference on information retrieval}}.
  Springer, \bibinfo{pages}{45--57}.
\newblock


\bibitem[\protect\citeauthoryear{Zhou, Song, Zhu, Ma, Yan, Dai, Zhu, Jin, Li,
  and Gai}{Zhou et~al\mbox{.}}{2017}]%
        {zhou2017deep}
\bibfield{author}{\bibinfo{person}{Guorui Zhou}, \bibinfo{person}{Chengru
  Song}, \bibinfo{person}{Xiaoqiang Zhu}, \bibinfo{person}{Xiao Ma},
  \bibinfo{person}{Yanghui Yan}, \bibinfo{person}{Xingya Dai},
  \bibinfo{person}{Han Zhu}, \bibinfo{person}{Junqi Jin}, \bibinfo{person}{Han
  Li}, {and} \bibinfo{person}{Kun Gai}.} \bibinfo{year}{2017}\natexlab{}.
\newblock \showarticletitle{Deep interest network for click-through rate
  prediction}.
\newblock \bibinfo{journal}{\emph{arXiv preprint arXiv:1706.06978}}
  (\bibinfo{year}{2017}).
\newblock


\end{thebibliography}
\balance
\end{document}